\newif\ifisarxiv
\newcommand{\citet}[1]{\cite{#1}}
\renewcommand{\vspace}[1]{}
\newlength{\mywidth}
\newlength{\mywidth}
\newif\ifdebug
\newtheorem{lemma}{Lemma}
\newtheorem{theorem}{Theorem}
\newtheorem{proposition}{Proposition}
\newcommand{\ceil}[1]{\left\lceil #1 \right\rceil}
\newcommand{\floor}[1]{\left\lfloor #1 \right\rfloor}
\newcommand{\Hl}[1]{{\Hv^{(#1)}}}
\newcommand{\hHlT}[1]{{{{}\hat{ \Hv}^{(#1)}}^\top}}
\newcommand{\bnl}{b_n^{(l)}}
\newcommand{\Bnl}{B_n^{(l)}}
\newcommand{\Ld}{\Lc_{\Dc}}
\newcommand{\Cl}[1]{{\Cv^{(#1)}}}
\newcommand{\hCl}[1]{{\hat\Cv^{(#1)}}}
\newcommand{\Thetal}[1]{{\Thetav^{(#1)}}}
\newcommand{\F}[2]{{\Fv^{(#1)}}\left(#2\right)}
\newcommand{\G}[2]{{\Gv^{(#1)}}\left(#2\right)}
\newcommand{\GH}[2]{{\Gv_{\Hv}^{(#1)}}\left(#2\right)}
\newcommand{\GT}[2]{{\Gv_{\Thetav}^{(#1)}}\left(#2\right)}
\newcommand{\hn}{\hat\nabla}
\newcommand{\vect}[1]{\boldsymbol{\mathbf{#1}}}
\newcommand{\argmax}{\operatornamewithlimits{argmax}}
\newcommand{\E}[1]{\mathbb{E}\left[#1\right]}
\newcommand{\Econd}[2]{\mathbb{E}\left[#1\;\middle|\;#2\right]}
\newcommand{\Var}[1]{\mathrm{Var}\left[#1\right]}
\newcommand{\Varcond}[2]{\mathrm{Var}\left[#1\;\middle|\;#2\right]}
\newcommand{\Thetav}{\vect\Theta}
\newcommand{\bv}{\vect b}
\newcommand{\hv}{\vect h}
\newcommand{\mv}{\vect m}
\newcommand{\sv}{\vect s}
\newcommand{\uv}{\vect u}
\newcommand{\wv}{\vect w}
\newcommand{\xv}{\vect x}
\newcommand{\yv}{\vect y}
\newcommand{\Cv}{\vect C}
\newcommand{\Fv}{\vect F}
\newcommand{\Gv}{\vect G}
\newcommand{\Hv}{\vect H}
\newcommand{\Rv}{\vect R}
\newcommand{\Wv}{\vect W}
\newcommand{\Xv}{\vect X}
\newcommand{\Yv}{\vect Y}
\newcommand{\Bc}{\mathcal B}
\newcommand{\Dc}{\mathcal D}
\newcommand{\Lc}{\mathcal L}
\newcommand{\Uc}{\mathcal U}
\newcommand{\Eb}{\mathbb E}
\newcommand{\Ib}{\mathbb I}
\newcommand{\Rb}{\mathbb R}
\newcommand{\norm}[1]{\left\lVert#1\right\rVert}
\newtheorem*{rep@theorem}{\rep@title}
\newcommand{\newreptheorem}[2]{%
	\newenvironment{rep#1}[1]{%
		\def\rep@title{#2 \ref{##1}}%
		\begin{rep@theorem}}%
		{\end{rep@theorem}}}
\newcommand{\method}{ActNN\xspace}
\icmltitlerunning{\method: Reducing Training Memory Footprint via 2-Bit Activation Compressed Training}
\title{\method: Reducing Training Memory Footprint\\ via 2-Bit Activation Compressed Training}
\author{%
  Jianfei Chen\footnote{Equal contribution.}, Lianmin Zheng$^*$, Zhewei Yao, Dequan Wang \\
  Ion Stoica, Michael W. Mahoney, and Joseph E. Gonzalez \\
  University of California, Berkeley\\
  \texttt{\{jianfeic, lmzheng\}@berkeley.edu} \\
}
\date{}
\begin{document}

\ifisarxiv
\maketitle
\else
\twocolumn[
\icmltitle{\method: Reducing Training Memory Footprint\\ via 2-Bit Activation Compressed Training
}



\icmlsetsymbol{equal}{*}

\begin{icmlauthorlist}
\icmlauthor{Jianfei Chen}{equal,ucb}
\icmlauthor{Lianmin Zheng}{equal,ucb}
\icmlauthor{Zhewei Yao}{ucb}
\icmlauthor{Dequan Wang}{ucb}\\
\icmlauthor{Ion Stoica}{ucb}
\icmlauthor{Michael W. Mahoney}{ucb}
\icmlauthor{Joseph E. Gonzalez}{ucb}
\end{icmlauthorlist}

\icmlaffiliation{ucb}{UC Berkeley}

\icmlcorrespondingauthor{Jianfei Chen}{jianfeic@berkeley.edu}
\icmlcorrespondingauthor{Lianmin Zheng}{lmzheng@berkeley.edu}

\icmlkeywords{Machine Learning, ICML}

\vskip 0.3in
]



\printAffiliationsAndNotice{\icmlEqualContribution} 
\fi

\begin{abstract}
The increasing size of neural network models has been critical for improvements in their accuracy, but device memory is not growing at the same rate. 
This creates fundamental challenges for training neural networks within limited memory environments. 
In this work, we propose \method, a memory-efficient training framework that stores randomly quantized activations for back propagation. 
We prove the convergence of \method for general network architectures, and we characterize the impact of quantization on the convergence via an exact expression for the gradient variance. 
Using our theory, we propose novel mixed-precision quantization strategies that exploit the activation's heterogeneity across feature dimensions, samples, and layers. 
These techniques can be readily applied to existing dynamic graph frameworks, such as PyTorch, simply by substituting the layers. 
We evaluate \method on mainstream computer vision models for classification, detection, and segmentation tasks. 
On all these tasks, \method compresses the activation to 2 bits on average, with negligible accuracy loss. 
\method reduces the memory footprint of the activation by 12$\times$, and it enables training with a $6.6 \times$ to $14 \times$ larger batch size.
We implement ActNN as a PyTorch library at \url{https://github.com/ucbrise/actnn}.
\end{abstract}

\section{Introduction}

\begin{figure}[t]
\centering
\includegraphics[width=0.9\mywidth]{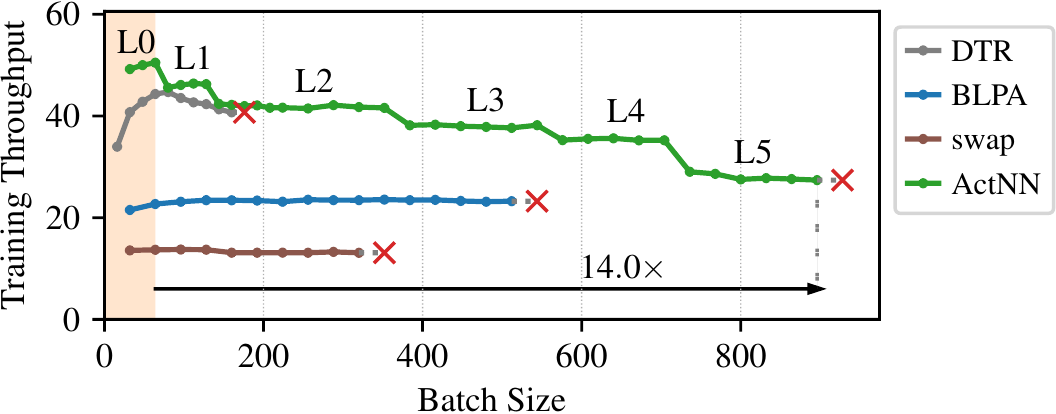}
\vspace{-.6em}
\caption{\small Batch size vs. training throughput on ResNet-152. Red cross mark means out-of-memory. The shaded yellow region denotes the possible batch sizes with full precision training.
\method achieves significantly larger maximum batch size over other state-of-the-art systems and displays a nontrivial trade-off curve.}
\label{fig:intro}
\vspace{-1.5em}
\end{figure}

Within the last three years, state-of-the-art machine learning models have become over 4,000 times larger~\cite{devlin2018bert,fedus2021switch}. 
On the other hand, the memory capacity  of GPUs has increased relatively slowly, remaining on the order of tens of gigabytes.
This creates a fundamental barrier to the development and training of neural networks.



Activation compressed training (ACT) is a promising approach to reduce the training memory footprint~\cite{chakrabarti2019backprop,fu2020don}.
During training, all layers' activations need to be kept in the memory for computing the gradients. 
ACT saves memory by compressing activations to lower numerical precision via quantization. 
It was proposed in BLPA~\cite{chakrabarti2019backprop}, and it was later extended by TinyScript~\cite{fu2020don} with non-uniform quantization strategies. 
These prior works succeeded in training ResNet-50 with 4-bit activations. 

However, applications of ACT are hindered by several drawbacks. 
First, the convergence behavior of ACT methods is not well understood (except for an analysis for multi-layer perceptrons~\cite{fu2020don}, under strong mean-field assumptions~\cite{yang2017mean}). 
Second, prior works mostly focus on dedicated architectures, e.g., a customized version of the pre-activation ResNet~\cite{he2016identity}, limiting their generality.
Third, existing quantization strategies are not specifically designed for ACT, making their compression ratio~suboptimal. 
%

In this work, we propose \method, a framework for ACT that overcomes all the challenges. 
\method stores randomly quantized activations to compute the gradients. 
Theoretically, we view the gradient computed by \method (``\method gradient'') as a stochastic approximation of the gradient computed with the full-precision activation (``FP gradient''). 
We show that the \method gradient is an unbiased estimator of the FP gradient, and  we prove \method 's convergence \emph{for general model architectures}. 
This enables one to apply ACT to general problems with theoretical guarantees.

We characterize the impact of quantization on the convergence via an exact expression for the gradient variance. 
Better quantization strategies reduce the gradient variance, and can achieve satisfactory convergence with fewer bits. 
Inspired by the theory, we design novel quantization strategies to exploit activations' heterogeneity across feature dimensions, samples, and layers. 
This includes a per-group quantizer and a fine-grained mixed precision algorithm, which approximately minimizes the gradient variance under a given memory budget. 
\method tunes the quantization strategy on-the-fly. On a wide range of tasks, including image classification, semantic segmentation, and object detection, \method compresses activations to 2 bits, with negligible ($<0.5\%$) accuracy loss. \method even converges and produces reasonable results with only 1.25-bit activations. 
This improves significantly from prior work~\cite{chakrabarti2019backprop,fu2020don}, which only converges with 4 bits. 

We implement our method as a library based on PyTorch. The library consists of a collection of activation compressed layers.
Memory saving training can be achieved with simply layer substitution, e.g., replace \texttt{torch.nn.Conv2d} with \texttt{actnn.Conv2d}.
The library also provides several optimization levels to exploit the trade-off between memory saving and training speed. In practice, \method reduces the activation memory by 12$\times$, enabling training with a $6.6\times$ to $14\times$ larger batch size on the same GPU. We compare \method with existing systems, where \method achieves a much larger batch size (Fig.~\ref{fig:intro}). 
\method also enables training larger models without additional computational resources. With a fixed amount of memory, \method scales the training of ResNet to $6.4\times$ deeper, or $3.7 \times$ wider, or $3.1 \times$ higher resolution. 


To summarize, our contributions are in three folds:\vspace{-.5em}
\begin{enumerate}[nosep]
	\item A general convergence theory for ACT;
	\item An heterogeneity-aware quantization strategy that achieves 2-bit compression;
	\item An efficient implementation of activation compressed layers in PyTorch.
\end{enumerate}



\tikzstyle{layer} = [rectangle, text centered, draw=black, fill=red!30, style={inner sep=0,outer sep=0}, minimum height=0.35cm]
\tikzstyle{augmented}=[rectangle, draw=black, pattern=north east lines, pattern color=black]
\tikzstyle{arrow} = [thick,->,>=stealth]
\tikzstyle{layertext} = [text centered]
\tikzstyle{background}=[rectangle, draw=black!80!white, very thick, dashed, style={inner sep=0.2cm}]
\tikzstyle{backgroundlegend}=[rectangle, draw=black, dashed, style={inner sep=0.15cm}]
\tikzstyle{layerlegend} = [rectangle, draw=black, fill=red!30, style={inner sep=0.15cm}]
\tikzstyle{auglegend} = [rectangle, draw=black, pattern=north east lines, pattern color=black, style={inner sep=0.15cm}]

\tikzset{
	lbl/.style={font=\tiny},
	base/.style={on chain, on grid, align=center, minimum height=4ex, font=\small},
	forward/.style={base, draw, rectangle, thick, fill=green!50!white, text width=2em},
	lstyle/.style={base, draw, ellipse, thick, fill=purple!50!white, text width=.8em},
	backward/.style={base, draw, rectangle, thick, fill=cyan!50!white, text width=2em},
	txt/.style={base, text centered, text width=2em},
	legendtxt/.style={base, align=left, text width=8em},	
	smalltxt/.style={on chain, on grid, text centered, text width=2em, font=\tiny},
	compress/.style={draw, circle, minimum size=4mm, inner sep=0, fill=red!50!white},
	decompress/.style={draw,  diamond, minimum size=4mm, inner sep=0, fill=blue!50!white},
	context/.style={thick, dashed, red},
	mainpath/.style={thick,>=stealth},
	norm/.style={->, draw},
	free/.style={->, draw},
	cong/.style={->, draw},
	it/.style={font={\small\itshape}}
}

\begin{figure*}[t]
	\resizebox{\linewidth}{!}{
	\begin{tikzpicture}[
		    >=stealth,              
	start chain=going right,    
	node distance=8mm and 12mm, 
	every join/.style={norm},   
    ]    
    \node(h0)[on chain, on grid]{};
	\node(f1)[forward] {$\Fv^{(1)}$};
	\node(f2)[forward] {$\Fv^{(2)}$};		
	\node(fn)[txt] {$\cdots$};
	\node(fl)[forward] {$\Fv^{(L)}$};
	\node(loss)[lstyle] {$\Lc$};
	\node(gl)[backward, join] {$\Gv^{(L)}$};
	\node(gn)[txt, join] {$\cdots$};
	\node(g2)[backward, join] {$\Gv^{(2)}$};
	\node(g1)[backward, join] {$\Gv^{(1)}$};
	\node(t1)[smalltxt, above=of f1]{$\Thetal{1}$};
	\node(t2)[smalltxt]{$\Thetal{2}$};
	\node(tl)[smalltxt, above=of fl]{$\Thetal{L}$};
	\node(ntl)[smalltxt, above=of gl]{$\hn_{\Thetal{L}}$};
	\node(nt2)[smalltxt, above=of g2]{$\hn_{\Thetal{2}}$};
	\node(nt1)[smalltxt]{$\hn_{\Thetal{1}}$};	
	\draw [->, mainpath] (h0.east) -- node[above, lbl] {$\Hl{0}$} (f1);	
	\draw [->, mainpath] (f1.east) -- node[above, lbl] {$\Hl{1}$} (f2);	
	\draw [->, mainpath] (f2.east) -- node[above, lbl] {$\Hl{2}$} (fn);	
	\draw [->, mainpath] (fn.east) -- node[above, lbl] {$\Hl{L-1}$} (fl);
	\draw [->, mainpath] (fl.east) -- node[above, lbl] {$\Hl{L}$} (loss);
	\draw [->, mainpath] (loss.east) -- node[above, lbl] {$\nabla_{\Hl{L}}$} (gl);
	\draw [->, mainpath] (gl.east) -- node[above, lbl] {$\hn_{\Hl{L-1}}$} (gn);
	\draw [->, mainpath] (gn.east) -- node[above, lbl] {$\hn_{\Hl{2}}$} (g2);
	\draw [->, mainpath] (g2.east) -- node[above, lbl] {$\hn_{\Hl{1}}$} (g1);
	\draw [->, mainpath] (t1.south) -- (f1);
	\draw [->, mainpath] (t2.south) -- (f2);
	\draw [->, mainpath] (tl.south) -- (fl);
	\draw [->, mainpath] (gl.north) -- (ntl);
	\draw [->, mainpath] (g2.north) -- (nt2);
	\draw [->, mainpath] (g1.north) -- (nt1);
	\node(c1)[compress, below=2mm of f1] {};
	\node(d1)[decompress, below=2mm of g1] {};
	\draw [->] (f1.south) -- (c1);
	\draw [->, context] (c1.south) -- ++(0mm, -6mm) -| node [near end, lbl, xshift=-3mm]  {$\hCl{1}$} (d1.south);
	\draw [->] (d1.north) -- (g1.south);
	
	\node(c2)[compress, below=2mm of f2] {};
	\node(d2)[decompress, below=2mm of g2] {};
	\draw [->] (f2.south) -- (c2);
	\draw [->, context] (c2.south) -- ++(0mm, -2mm) -| node [near end, lbl, xshift=-3mm]  {$\hCl{2}$} (d2.south);
	\draw [->] (d2.north) -- (g2.south);

	\node(cl)[compress, below=2mm of fl] {};
	\node(dl)[decompress, below=2mm of gl] {};
	\draw [->] (fl.south) -- (cl);
	\draw [->, context] (cl.east) --  node [near end, lbl, yshift=2mm]  {$\hCl{L}$} (dl);
	\draw [->] (dl.north) -- (gl.south);	

	\node(memory)[below=12mm of fl]{};
	\begin{scope}[every edge/.append style={thick,->,>=stealth}]
		\begin{pgfonlayer}{background}
	\node [background, style={inner sep=0.2cm},
	fit=(fl) (cl) (tl) (memory),
	label=right:\scriptsize{}, rounded corners=5, fill=gray!5] {};
	\end{pgfonlayer}
	\end{scope}
	\draw [->,mainpath,very thick] (tl) ++(-3mm, 6mm) -- ++(6mm, 0mm);
	\end{tikzpicture}\hspace{1em}
	\begin{tikzpicture}[
>=stealth,              
start chain=going right,    
node distance=8mm and 12mm, 
every join/.style={norm},   
]    
\node(t1)[legendtxt]{Full-Precision Tensor};
\node(t1l)[left=6mm of t1]{};
\node(t2)[legendtxt, below=6mm of t1]{Compressed Tensor};
\node(t3)[legendtxt, below=6mm of t2]{Compressor};
\node(t4)[legendtxt, below=6mm of t3]{Decompressor};
\node(t5)[legendtxt, below=6mm of t4]{Active Memory};
\draw [<-,mainpath] (t1.west) ++(-2mm, 0mm) -- ++(-6mm, 0mm);
\draw [<-,context] (t2.west) ++(-2mm, 0mm) -- ++(-6mm, 0mm);
\node(c)[compress, left=2mm of t3] {};
\node(d)[decompress, left=2mm of t4] {};
\node(e)[left=3mm of t5]{};
\begin{scope}[every edge/.append style={thick,->,>=stealth}]
\begin{pgfonlayer}{background}
\node [background, style={inner sep=0.1cm},
fit=(e),
label=right:\scriptsize{}, rounded corners=5, fill=gray!5] {};
\end{pgfonlayer}
\end{scope}

\begin{scope}[every edge/.append style={thick,->,>=stealth}]
\begin{pgfonlayer}{background}
\node [rectangle, draw=black, thick, inner sep=0.2cm,
fit=(t1l) (t5),
label=right:\scriptsize{}] {};
\end{pgfonlayer}
\end{scope}
\end{tikzpicture}
}
\vspace{-.6cm}
	\caption{\footnotesize ActNN's computational graph. Nodes: operations; Edges: tensors.  Edges that intersect with the dashed box are kept in memory. \label{fig:architecture}}
\end{figure*}
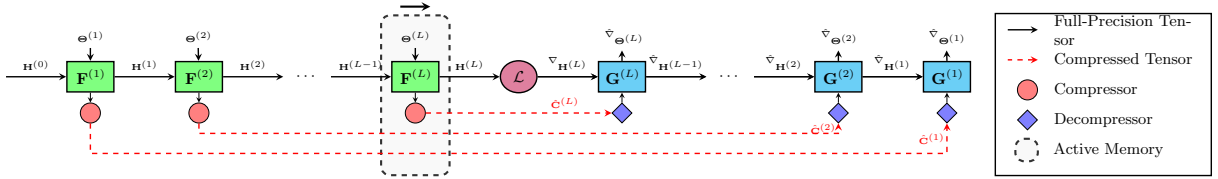

\section{Related Works}\label{sec:related}

\noindent \textbf{Quantized Training (QT)} Quantization-aware training~\cite{zhou2016dorefa,choi2018pact,Zhang_2018_ECCV,jacob2018quantization,dong2019hawqv2} or fully-quantized training~\cite{micikevicius2018mixed,wang2018training,chen2020statistical,sun2020ultra} aim to reduce the computational cost with quantization at the inference or training time. As a side effect, the training memory footprint can also be reduced. However, QT is a more challenging task to solve, as \emph{computational} kernels must directly support quantized tensors. In contrast, ACT only considers the \emph{storage}, and it can utilize more flexible quantization strategies for better compression. Furthermore, QT and ACT are complementary. One can utilize QT to accelerate the training, and apply ACT to further reduce the memory footprint. 

\noindent\textbf{Model / Gradient Compression} Model compression~\cite{han2015deep_compression} and gradient compression~\cite{lin2017deep} compress the weight and gradient to reduce the storage and communication overhead. However, activations have different properties with the weight and gradient, e.g., it has a ``sample'' axis. Moreover, ACT is more sensitive to the compression speed, as activations need to be compressed on-the-fly, and they are typically much larger than weights and gradients. \method 's compression strategy is designed specifically for these unique properties. 


\noindent\textbf{Memory-Efficient Training Systems}
Gradient checkpointing~\cite{chen2016training,jain2019checkmate,shah2020memory,kirisame2020dynamic} 
trades computation for memory by dropping some of the activations in the forward pass and recomputing them in the backward pass. Swapping~\cite{meng2017training,huang2020swapadvisor,wang2018superneurons,peng2020capuchin,ren2021zero} utilizes the huge amount of available CPU memory by swapping tensors between CPU and GPU. Model-parallel training~\cite{shoeybi2019megatron,lepikhin2020gshard,wang2019supporting} partitions the model across GPUs, so each GPU only stores a fraction of layers. All these methods save memory by storing fewer tensors in GPU. In contrast, \method compresses saved tensors, and is complementary to these approaches.

\section{Formulation and Theory}

In this section, we present a mathematical formulation of \method. 
Then, we establish its convergence by viewing it as a special case of stochastic gradient descent (SGD). 
The proofs of all theorems as well as a table of notations can be found in Appendix~\ref{sec:theorems}.

\subsection{Problem Formulation}
Consider training an $L$-layer neural network on a dataset $\Dc$. In each training iteration, we sample a minibatch $\small(\Xv, \Yv)$ from the dataset. 
Given the input $\small\Hl{0}=\Xv$, the $l$-th layer of the network is defined in a general form\vspace{-.5em}
\begin{align}
\small
\Hl{l} = \F{l}{\Hl{l-1}; \Thetal{l}},\label{eqn:fp}
\end{align}
\vspace{-1.5em}where $\small\Hl{l}$ is a $\small N\times D^{(l)}$-dimensional feature map, $N$ is the batch size, $\small D^{(l)}$ is the number of features, and $\small\Thetal{l}$ is a vector of parameters. 
Given the minibatch loss
$
\small\Lc = l(\Hl{L}, \Yv)
$, we compute the gradient $\small\nabla_{\Thetav^{(l)}}\Lc$, and update the parameter with SGD~\cite{bottou2010large}. 
Since the gradient is always taken with the loss $\Lc$, we simply denote the activation / parameter gradient as $\small\nabla_{\Hl{l}}$ and $\small\nabla_{\Thetav^{(l)}}$. 
To compute the gradient, 
the back-propagation can be expressed as\vspace{-.5em}
\begin{align}
\small
\nabla_{\Hl{l-1}}, \nabla_{\Thetal{l}}
 = \G{l}{\nabla_{\Hl{l}}, \Cv(\Hl{l-1}, \Thetal{l})},\label{eqn:fp-bp}
\end{align}
\vspace{-1.25em}
where $\Cv(\cdot)$ is the \emph{context}, i.e., the information that needs to be kept in  memory for back propagation. 
Essentially, the function $\small\G{l}{\cdot}$ takes the gradient of the output $\small\nabla_{\Hl{l}}$ and the context, and computes the gradient of the input. 
We refer this approach as full-precision (FP) training, and $\small\nabla_{\Hl{l}}$ and  $\small\nabla_{\Thetal{l}}$ as the FP gradient.
As a special case, consider a linear layer $\small\Hl{l}=\Hl{l-1}\Thetal{l}$ and its gradient
\vspace{-.5em}
\begin{align}
\small
\nabla_{\Hl{l-1}} = \nabla_{\Hl{l}} \Thetal{l}^\top,~~~
\nabla_{\Thetal{l}} = \Hl{l-1}^\top \nabla_{\Hl{l}}.\label{eqn:linear-bp}
\end{align}

\vspace{-1em}
In this case, we have $\small\Cv(\Hl{l-1}, \Thetal{l})=(\Hl{l-1}, \Thetal{l})$.

\subsection{Activation Compressed Training}

The context, in particular the activation, dominants the memory overhead for training neural networks on many tasks. 
To address this, instead of saving the full-precision context, \method saves a compressed version $\small\hat\Cv(\Hl{l-1}, \Thetal{l})$. 
In principle, any compression algorithm, either lossy or lossless, can be used here. 
In this paper, however, we solely focus on compressing by quantizing the context to lower numerical precision, since its overhead is relatively small. 
In this way, the compressed context $\small\hat\Cv$ is a lower precision version of the context $\small\Cv$. 
With the compressed context, we define the activation-compressed (AC) gradient as:
\begin{align}
\small
\hn_{\Hl{l-1}}, \hn_{\Thetal{l}}
 = \G{l}{\hn_{\Hl{l}}, \hat\Cv(\Hl{l-1}, \Thetal{l})},\label{eqn:ac-bp}
\end{align}
where $\small\hn_{\Hl{L}}=\nabla_{\Hl{L}}$. 
\method uses the AC gradient to update parameters. 
See Fig.~\ref{fig:architecture} for an illustration. 
Notice that, FP training and \method share the same forward propagation Eq.~(\ref{eqn:fp}), so their behavior is identical at inference time.

%

\subsection{Convergence Theory}

Now we study the convergence of \method. 
Assume that $\small\hat\Cv$ is quantized randomly, such that $\small\hat\Cv$ can be viewed as a stochastic estimator of $\small\Cv$. 
In this way, both FP and \method can be considered as SGD algorithms, with different stochastic gradients. 
Formally, let $\small\Thetav_t=\{\Thetal{l}\}_{l=1}^L$ be a flattened vector of parameters at the $t$-th iteration, and
$\small\nabla_{\Thetav_t}=\{\nabla_{\Thetav_t^{(l)}}\}_{l=1}^L$   /
$\small\hn_{\Thetav_t}=\{\hn_{\Thetav_t^{(l)}}\}_{l=1}^L$ be the corresponding FP / AC gradient, defined as Eq.~(\ref{eqn:fp-bp}) / Eq.~(\ref{eqn:ac-bp}). Furthermore, let $\small\Ld(\Thetav)$ be the batch loss on the entire dataset. Then, both $\small\nabla_{\Thetav}$ and $\small\hn_{\Thetav}$ are stochastic estimators of the batch gradient $\small\nabla_{\Thetav}\Ld(\Thetav)$. The stochasticity of the FP gradient $\small\nabla_{\Thetav}$ comes solely from random sampling of the minibatch, which we assume to be unbiased, i.e., $\small\E{\nabla_{\Thetav}}=\nabla_{\Thetav}\Ld(\Thetav)$. On the other hand, the stochasticity of the AC gradient $\small\hn_{\Thetav}$ further comes from the random quantization of the context. 
The question is whether the AC gradient can be made unbiased as well, and for this, the answer is positive.

\begin{theorem}\label{thm:bias} (Unbiased  Gradient) There exists random quantization strategies for $\small\hat\Cv$, such that \vspace{-.5em}$$\small\E{\hn_{\Thetav}}=\nabla_{\Thetav}\Ld(\Thetav).$$
\end{theorem}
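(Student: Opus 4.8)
The plan is to reduce the claim to a conditional statement on a fixed minibatch and then exploit the fact that back-propagation is \emph{linear} in the incoming activation gradient. Since the FP gradient is assumed unbiased, $\E{\nabla_{\Thetav}}=\nabla_{\Thetav}\Ld(\Thetav)$, the tower property gives $\E{\hn_{\Thetav}}=\E{\Econd{\hn_{\Thetav}}{\Xv,\Yv}}$, so it suffices to exhibit quantizers for which $\Econd{\hn_{\Thetal{l}}}{\Xv,\Yv}=\nabla_{\Thetal{l}}$ for every $l$; i.e., conditioned on the minibatch, the AC gradient matches the FP gradient in expectation. Crucially, the forward pass is run in full precision, so every $\Hl{l}$, hence every true context $\Cv(\Hl{l-1},\Thetal{l})$ and the top gradient $\nabla_{\Hl{L}}$, is deterministic given $(\Xv,\Yv)$; all randomness enters only through the stored $\hCl{l}$.

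Concretely, I would have each layer store $\hCl{l}$ using an \emph{independent} unbiased stochastic quantizer (e.g. stochastic rounding), so that $\Econd{\hCl{l}}{\Xv,\Yv}=\Cv(\Hl{l-1},\Thetal{l})$ and the $L$ quantizations are mutually independent. The structural fact I would lean on is that $\G{l}{\cdot}$ is linear in its first argument $\hn_{\Hl{l}}$ — this is just the chain rule, $\nabla_{\Hl{l-1}}=(\partial\Hl{l}/\partial\Hl{l-1})^\top\nabla_{\Hl{l}}$ and $\nabla_{\Thetal{l}}=(\partial\Hl{l}/\partial\Thetal{l})^\top\nabla_{\Hl{l}}$ — and that, for the contexts used in practice (for a linear layer $\Cv=(\Hl{l-1},\Thetal{l})$ with the activation quantized, giving $\hn_{\Hl{l-1}}=\hn_{\Hl{l}}\Thetal{l}^\top$ and $\hn_{\Thetal{l}}=\hHlT{l-1}\hn_{\Hl{l}}$; for ReLU the bit-mask stored exactly), the backward output is \emph{multilinear} in the incoming gradient and the stored context, so that an unbiased context produces an unbiased Jacobian factor.

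The proof itself is a backward induction on $l=L,L-1,\dots,1$ establishing $\Econd{\hn_{\Hl{l}}}{\Xv,\Yv}=\nabla_{\Hl{l}}$ together with the analogous statement for $\hn_{\Thetal{l}}$. The base case is immediate since $\hn_{\Hl{L}}=\nabla_{\Hl{L}}$ is deterministic. For the inductive step, the key observation is that $\hn_{\Hl{l}}$ is a function of the quantizations at layers $l+1,\dots,L$ only, and is therefore independent of the layer-$l$ quantizer $\hCl{l}$; combining this independence with the linearity of $\G{l}{\cdot}$ in $\hn_{\Hl{l}}$ and the unbiasedness of $\hCl{l}$ lets me factor the conditional expectation, e.g. $\Econd{\hn_{\Thetal{l}}}{\Xv,\Yv}=\Econd{\hHlT{l-1}}{\Xv,\Yv}\,\Econd{\hn_{\Hl{l}}}{\Xv,\Yv}=\Hl{l-1}^\top\nabla_{\Hl{l}}=\nabla_{\Thetal{l}}$. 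Chaining through all layers and applying the tower property closes the argument.

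The step I expect to be the main obstacle is justifying that the backward output is genuinely unbiased under unbiased context quantization for \emph{general} architectures, not just the linear layer. This is where one must ensure that no single layer's backward multiplies two distinct quantized-and-correlated quantities, which would break unbiasedness since $\E{XY}\neq\E{X}\E{Y}$ for correlated $X,Y$; the resolution is to choose the context so that each quantized component enters the backward linearly and to keep the across-layer quantizations independent, so that the only products taken are between independent unbiased factors. Formalizing this multilinearity of back-propagation in the quantized quantities is the crux, after which the induction is routine.
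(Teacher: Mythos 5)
Your proposal is correct and follows essentially the same route as the paper's proof: backward induction on the layers, exploiting the linearity of $\G{l}{\cdot}$ in the incoming gradient, the independence of the layer-$l$ quantizer from $\hn_{\Hl{l}}$, and an unbiased quantizer applied to a context chosen so that each quantized quantity enters the backward pass linearly (the paper takes $\hat\Cv=Q(\{\sfrac{\partial H^{(l)}_{kl}}{\partial H^{(l-1)}_{ij}},\sfrac{\partial H^{(l)}_{kl}}{\partial \Theta^{(l)}_{i}}\})$ in its Lemma~\ref{lemma}). The obstacle you flag at the end --- avoiding products of correlated quantized quantities in nonlinear layers --- is exactly the issue the paper resolves in Appendix~\ref{sec:layers}, e.g.\ by storing two independently quantized copies of the input for normalization layers.
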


\vspace{-1em}
Intuitively, according to the chain rule, the back-propagation Eq.~(\ref{eqn:fp-bp}) can be rewritten as \vspace{-.5em}
{\footnotesize
\begin{align}
\nabla_{H^{(l-1)}_{ij}} = \sum_{kl} \tfrac{\partial H^{(l)}_{kl}}{\partial H^{(l-1)}_{ij}} \nabla_{H^{(l)}_{kl}},
\nabla_{\Theta^{(l)}_{i}} = \sum_{kl} \tfrac{\partial H^{(l)}_{kl}}{\partial \Theta^{(l)}_{i}} \nabla_{H^{(l)}_{kl}}.\label{eqn:linear-bp-2}
\end{align}
}%

\vspace{-1.25em}
Take 
$\footnotesize\hat\Cv(\Hl{l-1}, \Thetal{l})=Q(\{\sfrac{\partial H^{(l)}_{kl}}{\partial H^{(l-1)}_{ij}}, \sfrac{\partial H^{(l)}_{kl}}{\partial \Theta^{(l)}_{i}}\})$
, where $Q(\cdot)$ is an unbiased quantizer. As Eq.~(\ref{eqn:linear-bp-2}) is just a linear operation, we can show that $\small\hn_{\Hl{l-1}}$ and $\small\hn_{\Thetal{l}}$ are unbiased as long as $\small\hat\Cv(\Hl{l-1}, \Thetal{l})$ and $\small\hn_{\Hl{l}}$ are unbiased, which can be proven by induction. 
The linear layer $\small\hn_{\Thetal{l}} = Q(\Hl{l-1})^\top \hn_{\Hl{l}}$ is especially simple, where we can just use $\small Q(\Hl{l-1})$ as the compressed context. General layers are more complicated as directly storing the Jacobian matrices $\footnotesize\{\sfrac{\partial H^{(l)}_{kl}}{\partial H^{(l-1)}_{ij}}, \sfrac{\partial H^{(l)}_{kl}}{\partial \Theta^{(l)}_{i}}\}$ might be prohibitive. However, we show in Appendix~\ref{sec:layers} that for most frequently used layers, including convolution, pointwise, normalization, and up/down sampling, can be approximated in an unbiased way with a practical cost.



Given an unbiased gradient, we now establish the convergence of \method. Assume the SGD iteration takes the form $\small\Thetav_{t+1}\leftarrow \Thetav_{t}-\alpha\hn_{\Thetav_t}$, starting from an initial model $\small\Thetav_1$, and 

\noindent\textbf{A1.} The loss $\small\Lc_{\Dc}(\Thetav)$ is continuous differentiable and $\small\nabla\Lc_{\Dc}(\Thetav)$ is $\beta$-Lipschitz continuous. \\
\noindent\textbf{A2.} $\small\Lc_{\Dc}(\Thetav)$ is bounded below by $\Lc_{inf}$.\\
\noindent\textbf{A3.} There exists $\sigma^2>0$, such that  $\small\forall\Thetav$, $\small\Var{\hn_{\Thetav}}\le \sigma^2$, where for any vector $\xv$,  $\small\Var{\xv}:=\Eb\norm{\xv}^2 - \norm{\E{\xv}}^2$. 




The following convergence theorem is a standard result for SGD, taken from Theorem 4.8 in~\citet{bottou2018optimization}.
\begin{theorem}(Convergence)\label{thm:convergence} 
	If A1-A3 holds, and $0<\alpha\le \frac{1}{\beta}$, take the number of iterations $t$  uniformly from $\{1, \dots, T\}$, where $T$ is a maximum number of iterations. Then\vspace{-.5em}
	{\small
	\begin{align}
	\Eb\norm{\nabla\Lc_{\Dc}(\Thetav_t)}^2 \le \frac{2(\Lc(\Theta_1) - \Lc_{inf})}{\alpha T} + \alpha\beta\sigma^2.\label{eqn:thm-1}
	\end{align}
}%
\end{theorem}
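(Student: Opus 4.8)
The plan is to treat \method exactly as an instance of SGD with an unbiased stochastic gradient and to run the textbook $\beta$-smoothness argument, so that the only inputs are Theorem~\ref{thm:bias} (which supplies $\E{\hn_{\Thetav_t}\mid\Thetav_t}=\nabla\Ld(\Thetav_t)$) together with A1--A3. First I would invoke the descent lemma guaranteed by A1: since $\nabla\Ld$ is $\beta$-Lipschitz continuous,
$$\Ld(\Thetav_{t+1}) \le \Ld(\Thetav_t) + \iprod{\nabla\Ld(\Thetav_t),\, \Thetav_{t+1}-\Thetav_t} + \tfrac{\beta}{2}\norm{\Thetav_{t+1}-\Thetav_t}^2.$$
Substituting the update rule $\Thetav_{t+1}-\Thetav_t=-\alpha\hn_{\Thetav_t}$ turns the right-hand side into $\Ld(\Thetav_t)-\alpha\iprod{\nabla\Ld(\Thetav_t),\hn_{\Thetav_t}}+\tfrac{\beta\alpha^2}{2}\norm{\hn_{\Thetav_t}}^2$.

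Next I would take the conditional expectation given $\Thetav_t$. Unbiasedness collapses the inner-product term to $-\alpha\norm{\nabla\Ld(\Thetav_t)}^2$, while A3 combined with the identity $\Var{\xv}=\Eb\norm{\xv}^2-\norm{\E{\xv}}^2$ controls the second moment via $\Econd{\norm{\hn_{\Thetav_t}}^2}{\Thetav_t}=\norm{\nabla\Ld(\Thetav_t)}^2+\Varcond{\hn_{\Thetav_t}}{\Thetav_t}\le \norm{\nabla\Ld(\Thetav_t)}^2+\sigma^2$. This yields
$$\Econd{\Ld(\Thetav_{t+1})}{\Thetav_t} \le \Ld(\Thetav_t) - \alpha\left(1-\tfrac{\beta\alpha}{2}\right)\norm{\nabla\Ld(\Thetav_t)}^2 + \tfrac{\beta\alpha^2\sigma^2}{2}.$$
The step-size hypothesis $\alpha\le 1/\beta$ forces $1-\beta\alpha/2\ge 1/2$, so the coefficient of $\norm{\nabla\Ld(\Thetav_t)}^2$ is at most $-\alpha/2$, leaving a clean one-step decrease inequality.

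Finally I would take total expectations, rearrange to isolate $\tfrac{\alpha}{2}\Eb\norm{\nabla\Ld(\Thetav_t)}^2 \le \Eb\Ld(\Thetav_t)-\Eb\Ld(\Thetav_{t+1})+\tfrac{\beta\alpha^2\sigma^2}{2}$, and sum over $t=1,\dots,T$. The loss differences telescope, and A2 bounds $\Eb\Ld(\Thetav_{T+1})\ge\Lc_{inf}$, giving $\tfrac{\alpha}{2}\sum_{t=1}^T\Eb\norm{\nabla\Ld(\Thetav_t)}^2\le \Ld(\Thetav_1)-\Lc_{inf}+\tfrac{T\beta\alpha^2\sigma^2}{2}$. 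Dividing by $\alpha T/2$ and recognizing that drawing $t$ uniformly from $\{1,\dots,T\}$ makes the left-hand average equal to $\Eb\norm{\nabla\Ld(\Thetav_t)}^2$ produces exactly Eq.~(\ref{eqn:thm-1}). The argument is entirely routine; the only point demanding care is the probabilistic bookkeeping, namely applying the tower property correctly so that the unbiasedness of Theorem~\ref{thm:bias} — which holds in expectation over both the minibatch draw and the quantization noise — can legitimately be used as a conditional statement given $\Thetav_t$, and so that A3's uniform variance bound is valid at the random iterate $\Thetav_t$.
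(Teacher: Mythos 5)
Your proposal is correct and follows essentially the same route as the paper's proof: the standard $\beta$-smoothness descent lemma, conditional expectation using unbiasedness and A3, the step-size bound $1-\beta\alpha/2\ge 1/2$, telescoping with A2, and the uniform-$t$ reinterpretation. The paper's version (adapted from Theorem 4.8 of Bottou et al.) is identical in structure, and your explicit attention to the tower-property bookkeeping is if anything slightly more careful than the paper's write-up.
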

Eq.~(\ref{eqn:thm-1}) is composed of two terms. The first term converges to zero as the number of iterations $T$ goes to infinity, while the second term does not. Intuitively, the algorithm converges to the neighborhood of a stationary point, where the radius is controlled by the gradient variance. Note that unlike the previous work~\cite{fu2020don}, the convergence of \method is established for general network architectures, not just for multi-layer perceptrons.


\subsection{Gradient Variance}
According to Thm.~\ref{thm:convergence}, gradient variance plays a critical role to the quality of the converged solution. We investigate how does the quantization affect the variance, so we can design quantization strategies accordingly.  
Let $\small\Gv_{\Hv}(\cdot)$ and $\small\Gv_{\Thetav}(\cdot)$ be components of $\small\Gv(\cdot)$, corresponding to $\small\nabla_{\Hv}$ and $\small\nabla_{\Thetav}$. For simplicity, let $\small\Cl{l}$ and $\small\hCl{l}$  be the full-precision and compressed context. Further, define 
{\footnotesize
\begin{align*} 
 \GT{l\sim m}{\hn_{\Hl{m}}, \hat\Cv^{(m)}} = \GT{l}{\GH{l+1}{\cdots
		\GH{m}{\hn_{\Hl{m}}, \hCl{m} }
		\cdots , \Cl{l+1}}, \Cl{l}},
\end{align*}
}%
which is the gradient $\small\hn_{\Thetal{l}}$ computed from $\small\hn_{\Hl{m}}$,  using the compressed context only at the $m$-th layer, and the full-precision context for all the other layers. Then, the gradient variance  is specified by the following theorem:


\begin{theorem} (Gradient Variance) \label{thm:grad-var}\vspace{-.5em}
%
{\small
\begin{align}
\Var{\hn_{\Thetal{l}}} = \Var{ \nabla_{\Thetal{l}}} +  \sum_{m=l}^{L} 
\E{
	\Varcond{ \GT{l\sim m}{
			\hn_{\Hl{m}}, \hat\Cv^{(m)}
	}}{\hn_{\Hl{m}}}
} .\label{eqn:grad-var}
\end{align}
}%
\end{theorem}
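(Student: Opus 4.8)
The plan is to obtain Eq.~(\ref{eqn:grad-var}) from an iterated application of the law of total variance (equivalently, the orthogonal decomposition of a martingale into its increments) along a filtration that reveals the sources of randomness in $\hn_{\Thetal{l}}$ one at a time. There are two kinds of randomness: the minibatch sampling, which I reveal first, and the independent unbiased quantization noises injected at layers $L, L-1, \dots, l$ during back-propagation, which I reveal in that order. Concretely, let $\Fc_L$ be generated by the minibatch alone, and for $m = L, L-1, \dots, l$ let $\Fc_{m-1} = \Fc_m \vee \sigma(\hCl{m})$ be obtained by additionally revealing the quantization at layer $m$. Since $\hn_{\Hl{m}}$ is computed from $\hn_{\Hl{L}}=\nabla_{\Hl{L}}$ using only the contexts $\hCl{m+1},\dots,\hCl{L}$, it is $\Fc_m$-measurable, while $\hn_{\Thetal{l}}$ is $\Fc_{l-1}$-measurable. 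The increments of the martingale $M_k=\Econd{\hn_{\Thetal{l}}}{\Fc_k}$ are orthogonal, so
\[
\Var{\hn_{\Thetal{l}}} = \Var{\Econd{\hn_{\Thetal{l}}}{\Fc_L}} + \sum_{m=l}^{L}\E{\Varcond{\Econd{\hn_{\Thetal{l}}}{\Fc_{m-1}}}{\Fc_m}}.
\]
It then remains to identify the first summand with $\Var{\nabla_{\Thetal{l}}}$ and each term of the sum with the corresponding term of Eq.~(\ref{eqn:grad-var}).

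For the first summand, $\Econd{\hn_{\Thetal{l}}}{\Fc_L}$ averages over all quantization noise with the minibatch held fixed; by the unbiasedness of Theorem~\ref{thm:bias} (applied conditionally on the minibatch) this equals the FP gradient $\nabla_{\Thetal{l}}$, so $\Var{\Econd{\hn_{\Thetal{l}}}{\Fc_L}}=\Var{\nabla_{\Thetal{l}}}$. The crux is the conditional-mean identity $\Econd{\hn_{\Thetal{l}}}{\Fc_{m-1}} = \GT{l\sim m}{\hn_{\Hl{m}}, \hCl{m}}$. Given $\Fc_{m-1}$, the quantities $\hn_{\Hl{m}}$ and $\hCl{m}$ are fixed, hence so is $\hn_{\Hl{m-1}}=\GH{m}{\hn_{\Hl{m}},\hCl{m}}$, and the only remaining randomness is the quantization at layers $l,\dots,m-1$, which is independent and unbiased. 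Because each backward map $\Gv_{\Hv}$ and $\Gv_{\Thetav}$ is linear in its incoming gradient and in the Jacobian entries stored in the context, taking this expectation replaces each $\hCl{j}$ with $j<m$ by $\Cl{j}$ in the recursion; this is exactly the induction underlying Theorem~\ref{thm:bias}, and it produces precisely $\GT{l\sim m}{\hn_{\Hl{m}}, \hCl{m}}$, the gradient that compresses only at layer $m$ and uses full precision below.

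Substituting both identifications into the telescoping sum gives the theorem, once one checks that $\Varcond{\Econd{\hn_{\Thetal{l}}}{\Fc_{m-1}}}{\Fc_m}$ equals $\Varcond{\GT{l\sim m}{\hn_{\Hl{m}},\hat\Cv^{(m)}}}{\hn_{\Hl{m}}}$ as written in Eq.~(\ref{eqn:grad-var}): conditional on $\Fc_m$, the only randomness left in $\GT{l\sim m}{\hn_{\Hl{m}},\hCl{m}}$ is the layer-$m$ quantizer $\hCl{m}$, with $\hn_{\Hl{m}}$ and all full-precision contexts held fixed, and with the forward pass regarded as fixed (implicit in the theorem's notation) conditioning on $\Fc_m$ coincides with conditioning on $\hn_{\Hl{m}}$ for this variance. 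The main obstacle is the conditional-mean identity of the previous paragraph, i.e.\ establishing the martingale property that revealing the layer-$m$ quantizer leaves the $\GT{l\sim m}$ gradient in expectation; it is there that unbiasedness (Theorem~\ref{thm:bias}), the independence of the per-layer quantizers, and the linearity of the backward maps must be combined. The orthogonality of the martingale increments and the bookkeeping of which contexts are compressed versus full-precision are then routine.
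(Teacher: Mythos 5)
Your proposal is correct and is essentially the paper's own argument: the paper also proves this by recursively applying the law of total variance, peeling off one layer's quantization noise at a time and using the unbiasedness from Theorem~\ref{thm:bias} to identify each conditional expectation with the partially-compressed gradient $\GT{l\sim m}{\hn_{\Hl{m}},\hCl{m}}$. Your filtration/martingale-increment packaging is just a more formal rendering of that same iterated total-variance decomposition, so the two proofs coincide in substance.
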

\vspace{-1em}
Thm.~\ref{thm:grad-var} disentangles all stochasticities in the gradient. The first term in Eq.~(\ref{eqn:grad-var}) is just the FP gradient variance, and it accounts for the  minibatch sampling. All the rest terms account for the noise of utilizing compressed context. Specifically, the term with $\footnotesize\GT{l\sim m}{\cdot, \hat\Cv^{(m)}}$ is the variance introduced by utilizing the compressed context $\small\hCl{m}$. See Appendix~\ref{sec:var-profile} for a visualization of these terms. 

The significance of Thm.~\ref{thm:grad-var} is in two folds. Firstly, it tells how much extra variance does activation compression introduce. If the activation compression variance is much smaller than the origin minibatch sampling variance, according to Thm.~\ref{thm:convergence}, we are confident that \method will converge similarly with FP training. In this case, we may reduce the numerical precision for free, as the quantization variance is negligible. Secondly, having an exact measurement of the variance, we can design quantization strategies to explicitly minimize it, as we shall see soon. 


\section{Compression Strategy}

\begin{figure}[t]
    \centering
\begingroup
\setlength{\tabcolsep}{0pt} 
\renewcommand{\arraystretch}{0} 
\scriptsize
\begin{tabular}{m{1cm}m{8cm}}
(a) &\includegraphics[width=.9\linewidth]{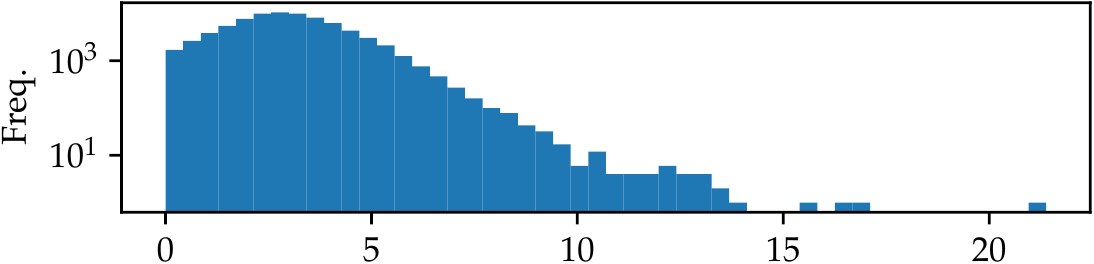}\\
(b) &\includegraphics[width=.9\linewidth]{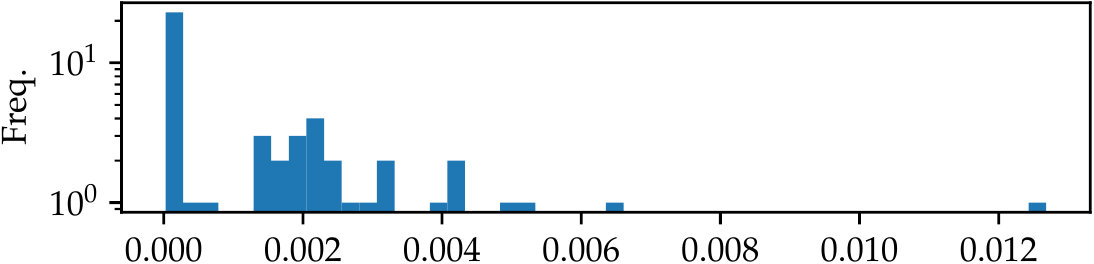}\\
(c) &\includegraphics[width=.9\linewidth]{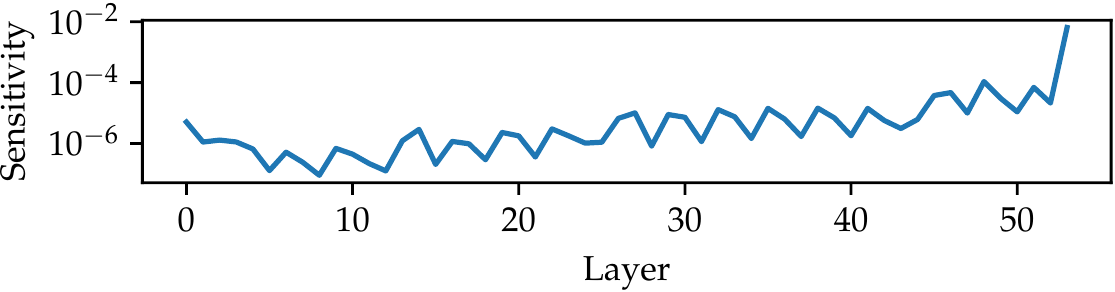}\\
\end{tabular}
\endgroup
\vspace{-1em}
\caption{\small{Heterogeneity in a ResNet50's activations.
(a) Histogram of the per-group range at the \texttt{conv\_2\_2\_1} layer;
(b) Histogram of the per-sample sensitivity at the same layer;
(c) The sensitivity per dimension for each layer.
}
    \label{fig:heterogeneity}}   \vspace{-1em}
\end{figure}

As mentioned earlier, \method compresses the activation by quantizing them to lower precision. As the number of bits goes down, the compression ratio gets better, but the gradient variance also grows. 

The activation is highly heterogeneous. As illustrated in Fig.~\ref{fig:heterogeneity}, the activation's magnitude, sensitivity, and dimensionality vary across different feature dimensions, samples in the batch, and network layers. Therefore, it is suboptimal to use the same quantization scheme for all the elements, as done in prior works~\cite{chakrabarti2019backprop,fu2020don}. We design \method 's quantization strategy to be aware of these heterogeneities. Based on our theory, \method  tunes its quantization strategy on-the-fly to approximately minimize the variance defined as Eq.~(\ref{eqn:grad-var}). As we mentioned in Sec.~\ref{sec:related}, these techniques exploit unique characteristics of the activation compression problem, and differ from existing methods for quantized training and model compression. 


\subsection{Per-group Quantization}\label{sec:pergroup}
First, we propose a per-group quantization strategy to tackle the distinct numerical range across feature dimensions. Given an activation tensor $\small\Hv\in \Rb^{N\times D}$, we partition its dimensions into groups $\hv_{ni}$, where each group has $G$ elements. 
The numbers are quantized to $b$-bit unsigned integers, or $\small B=2^b-1$ quantization bins. 
For each element, we compute the minimum and maximum, and scale the activation:\vspace{-.5em}
$$
\small
\bar\uv_{ni}\leftarrow B\left(\hv_{ni} - Z_{ni}\right)/R_{ni},
$$\vspace{-1.4em}
where $\small R_{ni}=\max\{\hv_{ni}\} - \min\{\hv_{ni}\}$ is the range, $\small Z_{ni}=\min\{\hv_{ni}\}$ is the zero point, and 
$\small\bar\uv_{ni}$ is the activation scaled to $[0, B]$. Convert $\small\bar\uv_{ni}$ to integers with stochastic rounding~\cite{courbariaux2015binaryconnect} and store the result in memory as\vspace{-.5em}
$$\small
\hat\uv_{ni} = \ceil{\bar\uv_{ni}} ~~\mathrm{w. prob. }~~\bar\uv_{ni}-\floor{\bar\uv_{ni}} 
~~\mathrm{otherwise}~~  \floor{\bar\uv_{ni}}.
$$
\vspace{-1em}
During back-propagation, the activation is dequantized as\vspace{-.5em}
$$\small
\hat\hv_{ni} = \hat\uv_{ni}R_{ni}/B + Z_{ni}.
$$
\vspace{-1em}
Due to the unbiased nature of stochastic rounding, it is clear that $\small \E{\hat\uv_{ni}}=\bar\uv_{ni}$ and $\small \E{\hat\hv_{ni}}=\hv_{ni}$. 

Assuming that $\small \bar\uv_{ni}-\floor{\bar\uv_{ni}} \sim \Uc(0, 1)$, the quantization variance is $\small \Var{\hat\hv_{ni}}=\frac{R_{ni}^2}{B^2}\Var{\hat\uv_{ni}}=\frac{R_{ni}^2G}{6B^2}$. The advantage of per-group quantization (PG) can be seen through the variance. Existing quantization strategies~\cite{chakrabarti2019backprop,fu2020don} use a single range and zero-point per tensor, which can be viewed as a single group with the range $R=\max_{ni} R_{ni}$. However, as illustrated in Fig.~\ref{fig:heterogeneity}(a), the range for most groups is far smaller than $R$. Therefore, this strategy uses unnecessarily large range for most groups, significantly enlarging the variance. 
In practice, we set $G=256$ and store the per-group range and zero points in \texttt{bfloat16}, so each group costs extra 32 bits, which is 0.125 bits on average.

\subsection{Fine-Grained Mixed-Precision}\label{sec:fine-grined}
To further reduce the variance, \method uses mixed-precision quantization strategies, that choose the numerical precision adaptively for each sample and each layer. Let $\small\bnl$ be the number of bits for sample $n$'s activation at layer $l$, $\small\Hv_n^{(l)}$. Let $\small\Bnl$ be the corresponding number of quantization bins. Theoretically, $\small\bnl$ should be chosen to minimize the quantization variance specified as the second term in Eq.~(\ref{eqn:grad-var}). However, the full gradient variance is too complicated to be tractable. Instead, \method minimizes the following objective\vspace{-.75em}
{\small
\begin{align}\label{eqn:mp-var}
\mbox{Var} = \sum_{l=1}^L \E{\Varcond{ \GT{l}{\hn_{\Hl{l}}, \hCl{l}} }{\hn_{\Hl{l}}}},
\end{align}
}\vspace{-1em}
which omits some terms from Eq.~(\ref{eqn:grad-var}). Firstly, it omits the minibatch sampling term, which is not affected by the quantization scheme. Secondly, it only keeps the impact to $\small\hn_{\Hl{l}}$ from $\small\hCl{l}$, omitting all the more distant contexts $\small\hCl{m} (m>l)$. As studied in Appendix~\ref{sec:var-profile}, the impact diminishes as the parameter and context become more distant. We find optimizing with this approximate objective already significantly reduces the true gradient variance.


Regarding the specific form of variance, we  take  linear layers as an example, where \\
$
\footnotesize\Varcond{ \GT{l}{\hn_{\Hl{l}}, \hCl{l}} }{\hn_{\Hl{l}}}=\Var{\hHlT{l-1} \hn_{\Hl{l}}}.
$ Simplifying the notations by omitting the conditional variance, layer indices, and let $\small\nabla:=\hn_{\Hl{l}}$, we have\vspace{-.75em}
%
{
\footnotesize
\begin{align}
\Var{\hat\Hv^\top \nabla}=\sum_{ij} \mathrm{Var}[\sum_n\hat h_{ni}\nabla_{nj}]=\sum_{ijn}\nabla_{nj}^2 \Var{\hat h_{ni}}
=\frac{G}{6}\sum_{ijn} \nabla_{nj}^2  R_{ni}^2 / B_n^2
= \frac{G}{6} \sum_n \norm{\nabla_n}^2 \norm{\Rv_n}^2 / B_n^2,\label{eqn:linear-var}
\end{align}
}\vspace{-1em}
where $G$ and $R_{ni}$ are the group size and per-group range defined in Sec.~\ref{sec:pergroup}, and $\Rv_n=\{R_{ni}\}$. For each sample, the variance depends on  the gradient magnitude $\small\norm{\nabla_n}^2$ and the range $\small\norm{\Rv_n}^2$. 

In general, we can minimize the overall variance under a bits budget $b_{total}$ by allocating more bits to sensitive layers and samples, described as the following optimization problem:\vspace{-.5em}
{\small
\begin{align}
\min_{b_n^{(l)}} \sum_{l=1}^L \sum_{n=1}^N w_n^{(l)} / {B_n^{(l)}}^2 ~~\mbox{s.t.}  \sum_{l=1}^L D^{(l)} \sum_{n=1}^N b_n^{(l)} \le b_{total},\label{eqn:allocation-problem}
\end{align}
}%
where $\small\Bnl=2^{\bnl}-1$ as defined earlier, $D^{(l)}$ is the feature dimensionality, and $\small w_n^{(l)}$ is the sensitivity for sample $n$ at layer $l$. For linear layers, we have $\small w_n^{(l)}=\frac{G}{6}\lVert\hn_{\hv_n^{(l)}}\rVert^2 \lVert\Rv_n^{(l)}\rVert^2$ by Eq.~(\ref{eqn:linear-var}). We derive the sensitivity for other  layers in Appendix~\ref{sec:layers}.

Mixed-precision can reduce the gradient variance significantly. According to Fig.~\ref{fig:heterogeneity}(b, c), the sensitivity is diverse across samples, and the per-dimension sensitivity varies by several orders of magnitudes across layers. Mixed-precision considers these heterogeneities. Furthermore, with mixed-precision, the average number of bits is no longer limited to integers, enabling a more fine-grained tradeoff between compression ratio and gradient variance. 


\subsection{Run-time Adaptation}\label{sec:runtime}
To best utilize the data characteristics, \method tunes the mixed-precision quantization strategy at run time. In each SGD iteration, the tuning happens in two stages:

\vspace{-.25em}
1. \emph{(per-sample allocation)}
During the forward propagation for each layer $l$, \method computes and stores the sensitivity $\small w_n^{(l)}$ for each sample. Then, it computes the optimal $\bnl$ for each sample under a fixed bits budget $b^{(l)}$ \emph{for this layer}, by solving Prob.~(\ref{eqn:allocation-problem}). $\bnl$ is used to compress the activation. 

2. \emph{(per-layer allocation)} After finishing the back propagation, \method solves Prob.~(\ref{eqn:allocation-problem}) again for all the layers together, and sets $b^{(l)}\leftarrow \sum_n b_n^{(l)}$. 


\vspace{-.25em}
Prob.~(\ref{eqn:allocation-problem}) is a discrete optimization problem, and can be solved exactly by dynamic programming (DP). However, DP is too costly to be computed in each SGD iteration. Instead, \method adopts a simple greedy algorithm. It starts with high numerical precision, e.g., $b_n^{(l)}=8$ for all layers and samples, and progressively reduces the precision until it fits in the total bits budget. In each move, it chooses a $b_n^{(l)}$ to reduce by one, such that the increment of variance  is minimal. With a binary heap for picking up the optimal move, this greedy algorithm runs in  $O(NL\log_2(NL))$, where $N$ is the batch size, and $L$ is the model depth.

\vspace{-.25em}
Finally, the sensitivity $w_n^{(l)}$ might depend on the gradient magnitude for each sample, which is unknown by the time we compress. \method provides two options for estimating the gradient magnitude. The first option uses the stale gradient magnitude in the last epoch. The second option uses the moving average of gradient magnitude across samples. Both strategies work well in practice. 
\section{System Implementation}

We implement \method as a library based on PyTorch \cite{paszke2019pytorch}.
The system includes a collection of activation compressed layers.
Using the system only requires substituting all PyTorch's default layers with \method's layers (e.g., replace all \texttt{torch.nn.Conv2d} with \texttt{actnn.Conv2d}).
This substitution can be done automatically with a model converter.
The system also provides different optimization levels to control the trade-off between memory and speed.

\subsection{Activation Compressed Layers}

\begin{figure}[t]
	\centering
	\includegraphics[width=0.9\mywidth]{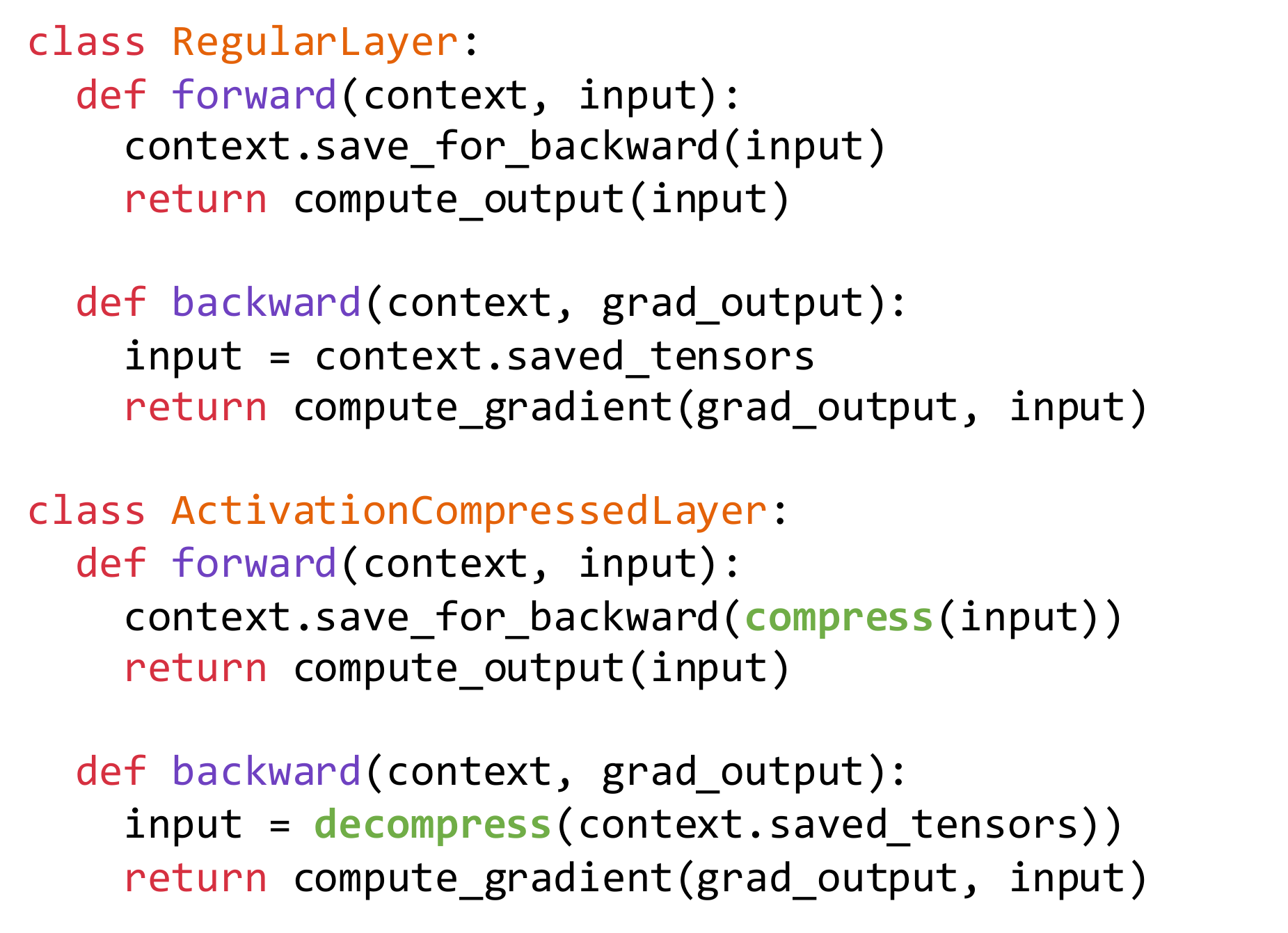}
	\vspace{-1em}
	\caption{\small Pseudo code for activation compressed layers.} 
	\vspace{-1em}
\label{fig:act_layers}
\end{figure}

Fig.~\ref{fig:act_layers} shows the pseudo-code for regular layers and activation compressed layers.
A regular layer saves full-precision activations into its context, while an activation compressed layer compresses the activations before saving them. Therefore, the memory required for saving context is reduced.
We implement highly-optimized CUDA kernels for compression and decompression, which
quantize floating-point numbers into integers and compress them into bit streams.

\begin{table}[t]
	\centering
	\caption{\small Usability Comparison of Memory Saving Systems. The systems include Checkmate~\cite{jain2019checkmate}, DTR~\cite{kirisame2020dynamic},
		MONet~\cite{shah2020memory},
		SwapAdvisor~\cite{huang2020swapadvisor}, Capuchin~\cite{peng2020capuchin},
		and BLPA~\cite{chakrabarti2019backprop}.  Remat.=Rematerialization, Comp.=Compression, AOT=ahead-of-training.
	}
	\label{tab:system_usability}
	\resizebox{\mywidth}{!}{
		\setlength{\tabcolsep}{3pt}
		\begin{tabular}{cccccc}
			\toprule
			\multirow{2}{*}{System}   & \multirow{2}{*}{Method}            & \footnotesize{Arbitrary}  & \footnotesize{Dynamic}  & \footnotesize{Zero AOT}  & \footnotesize{Standalone} \\
			&             &  \footnotesize{Graph} &  \footnotesize{Exec.} &  \footnotesize{Overhead} &  \footnotesize{Package} \\
			\midrule
			Checkmate   & Remat.    & yes  & no   & no   & yes  \\
			MONet       & Remat.    & yes  & no   & no   & yes  \\
			DTR         & Remat.    & yes  & yes  & yes  & no   \\
			SwapAdvisor & Swap      & yes  & no   & no   & no   \\
			Capuchin    & Swap      & yes  & yes  & yes  & no   \\
			BLPA        & Comp.     & no   & no   & yes  & yes  \\
			\method     & Comp.     & yes  & yes  & yes  & yes  \\
			\bottomrule
		\end{tabular}
	}
\end{table}

\begin{table}[t]
	\centering
	\caption{\small Optimization levels for \method. \label{tab:opt_level}}
	\resizebox{0.9\mywidth}{!}{
		\begin{tabular}{ccc}
			\toprule
			Level &  Compression Strategy & Bits\\
			\midrule
			L0  & Do not compress & 32\\
			L1  & per-group quantization for conv. layers & 4, 32\\
			L2  & per-group quantization & 4 \\
			L3  & L2 + fine-grained mixed-precision & 2\\
			L4  & L3 + swapping & 2\\
			L5  & L4 + defragmentation & 2\\
			\bottomrule
	\end{tabular}}\vspace{-1em}
\end{table}

Tab.~\ref{tab:system_usability} compares the usability of \method against other memory saving systems.
\method's layers can be easily plugged into existing deep learning frameworks without modifying the frameworks themselves.
The layers are fully compatible with existing features of PyTorch, such as dynamic execution and auto-differentiation.
In contrast, advanced swapping and tensor rematerialization methods require heavy modification of the frameworks.
Another benefit of \method is not introducing any ahead-of-training overhead. In contrast, some systems (e.g., Checkmate, MONeT, SwapAdvisor) require non-trivial ahead-of-training overhead to solve expensive optimization problems, which can take up to hours.

\subsection{Optimization Levels}
There is a trade-off between memory saving and training speed.
More overhead will be introduced for saving more memory.
To exploit this trade-off, \method provides 6 optimization levels.
Higher levels can save more memory but with more overhead.

Tab.~\ref{tab:opt_level} lists these optimization levels. 
L1 uses per-group quantization to compress convolutional layers and leaves all other layers unchanged, while L2 uses per-group quantization for all the layers.
L3 further adds fine-grained mixed-precision, which achieves a better compression ratio with some additional computational overhead.
L4 combines compression with swapping. We swap out all compressed activations to CPU memory during the forward pass and swap them in during the backward pass. 
At L5, we improve the memory allocator to reduce fragmentation. PyTorch uses a caching allocator to reuse GPU buffers. This makes allocation faster but introduces a serious memory fragmentation issue. At this level, we disable this caching allocator for large tensors to reduce fragmentation.
\section{Experiments}

\begin{figure*}[t]
	\centering
	\begingroup
	\setlength{\tabcolsep}{0pt} 
	\scriptsize
	\begin{tabular}{m{3.8cm}m{3.8cm}m{3.8cm}m{3.8cm}m{1.8cm}}
		\includegraphics[width=.95\linewidth]{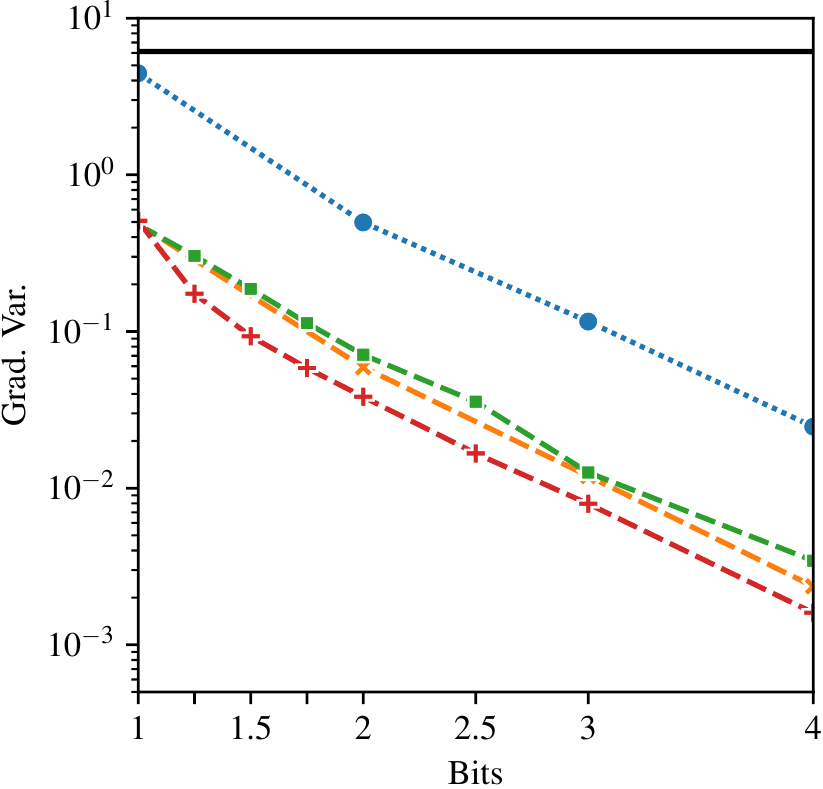}&
		\includegraphics[width=.95\linewidth]{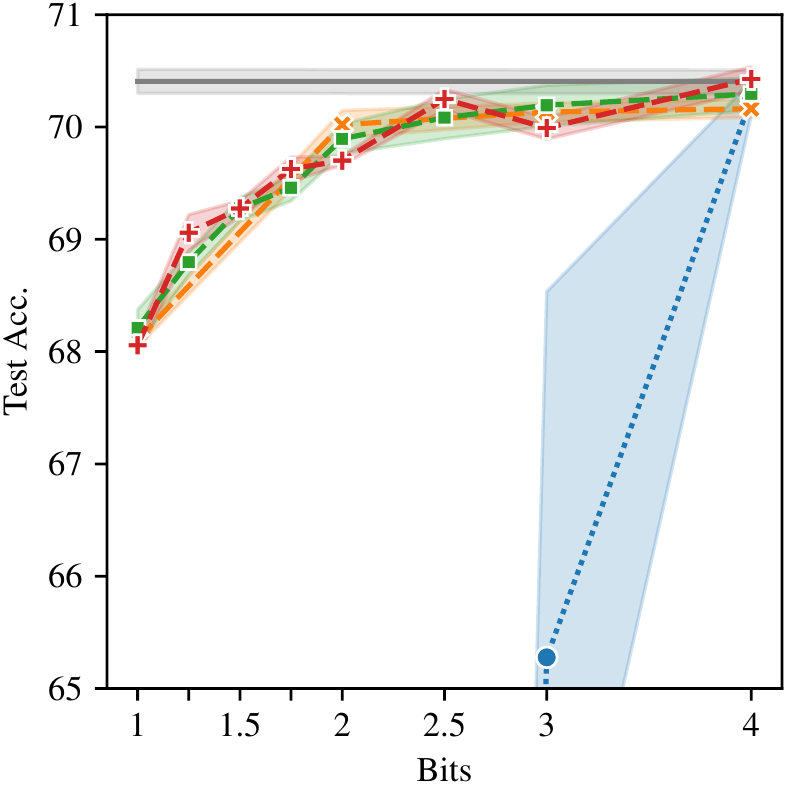}&
		\includegraphics[width=.95\linewidth]{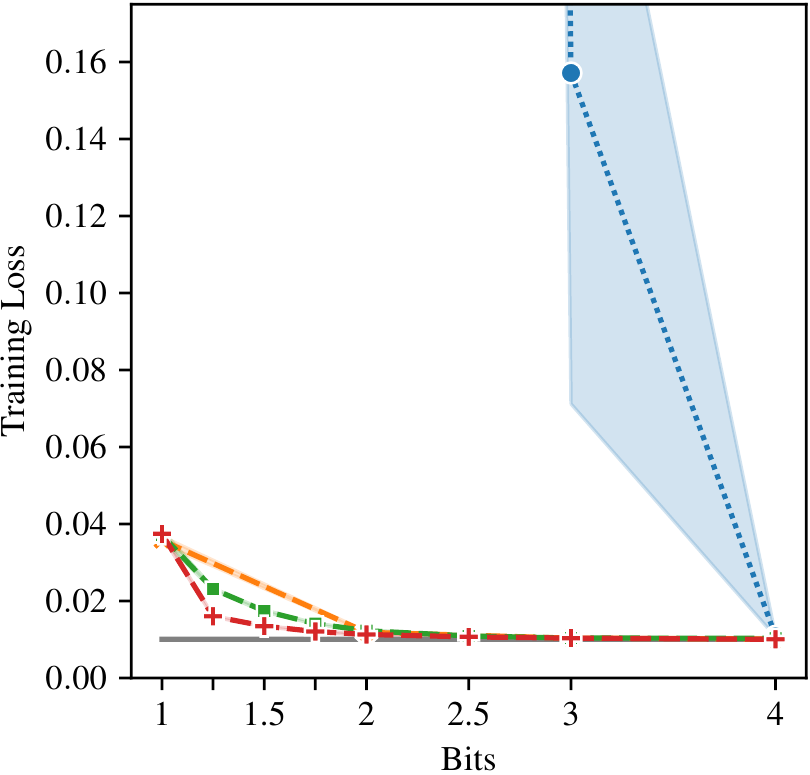}&
		\includegraphics[width=.95\linewidth]{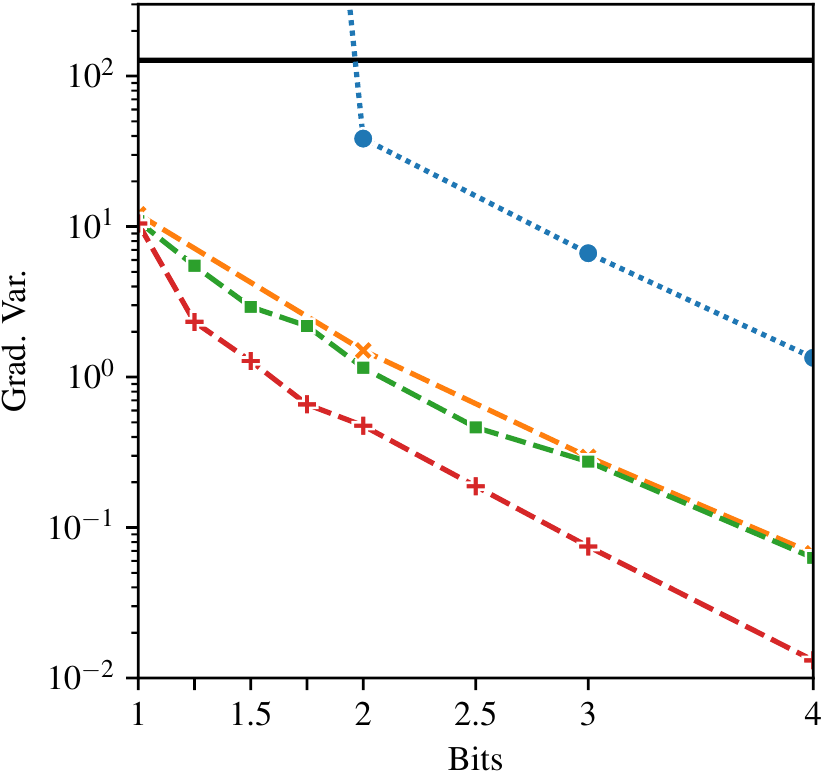}&
		\includegraphics[width=\linewidth]{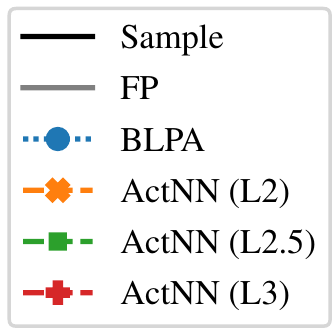}\\
		(a) Gradient variance on CIFAR-100 & (b) Testing accuracy on CIFAR-100 & (c) Testing loss on CIFAR-100 & (d) Gradient variance on ImageNet &
	\end{tabular}
	\endgroup
	\caption{\small Ablation study on the quantization strategy. BLPA diverges with 1 and 2 bits. The gradient variance is calculated at the 10th epoch for CIFAR-100 and the 50th epoch for ImageNet.  Sample=minibatch sampling, FP=full precision.}
	\label{fig:gradvar}
\end{figure*}

In this section, we evaluate \method on a wide range of tasks and compare it with other memory-saving systems. We use open-source model implementations and recipes for all tasks. Detailed experimental setup is in Appendix~\ref{sec:exp-setup}. The training logs for all the models we used are available at \url{https://wandb.ai/actnn}.

\subsection{Quantization Strategy}
We study the impact of activation compression on the accuracy and demonstrate our heterogeneity-aware quantization strategies. The baselines are full-precision (FP) training and BLPA~\cite{chakrabarti2019backprop}, a per-tensor quantization strategy with fixed numerical precision. To compare quantization strategies, we use \method (L2, L3) listed in Tab.~\ref{tab:opt_level}. \method (L4, L5) do not further compress the activation, so they have identical behavior with \method (L3). We also add an \method (L2.5) for comparison, which only allocates bits between samples while keeping all the layers with the same bits per dimension.

We perform the ablation studies on ResNet-56~\cite{he2016identity} on CIFAR-100~\cite{krizhevsky2009learning}, and ResNet-50~\cite{he2016deep} on ImageNet~\cite{deng2009imagenet}. We also provide results on CIFAR-10 in Appendix~\ref{sec:cifar10} for reference. 
The average number of bits is varied between $\{1, 1.25, 1.5, 1.75, 2, 2.5, 3, 4\}$. 
Non-integer bits is only supported by mixed-precision approaches, namely, \method (L2.5, L3). Each configuration is repeated by 5 times on CIFAR-100, and by once on ImageNet. Fig.~\ref{fig:gradvar}(a,d) shows the gradient variance from both minibatch sampling (``Sample'') and activation compression. The activation compression variance increases exponentially as the number of bits decreases, as analyzed in Eq.~(\ref{eqn:linear-var}). However, better algorithms achieve lower variance under a given bits budget. On ImageNet, the required bits to reach a variance below 2 is 4-bit for BLPA, 2-bit for \method (L2), 1.75-bit for \method (L2.5), and 1.5-bit for \method (L3). At the extreme 1-bit case, mixed-precision \method (L2.5, L3) fall back to \method (L2), since each number needs at least 1 bit to encode. This can be potentially improved by allowing the bits to go below 1 bit, e.g., with product quantization~\cite{stock2020and}, which we leave as future work. 

The validation accuracy (Fig.~\ref{fig:gradvar}(b)) and the training loss (Fig.~\ref{fig:gradvar}(c)) align with the gradient variance results, where BLPA needs 4-bit to achieve near-lossless accuracy, and \method only needs 2-bit. These results support our theoretical analysis (Thm.~\ref{thm:convergence}), where the gradient variance can be used as a surrogate of the approximation quality. Tab.~\ref{tab:imagenet} shows the results on ImageNet. \method significantly outperforms BLPA. BLPA achieves near-lossless result at 4-bit, and diverges at 3-bit, while \method (L2.5, L3) are still lossless at 2-bit. The result of BLPA at 4-bit is on par with \method (L3) with only 1.5 bits. 
Remarkably, \method converges and gives reasonable results even at the extreme 1.25-bit setting. This shows the robustness of our quantization strategy. Another relevant work, TinyScript~\cite{fu2020don}, utilizes non-uniform quantization intervals for activation compression. TinyScript has no public code, and it reports 7.74\% top-5 error with 3-bit, which is on par with \method with 1.25-bit. 

As we mentioned in Sec.~\ref{sec:related}, \method can be combined with other quantized training approaches. Here, we demonstrate this idea by combining \method with a mixed-precision training library, AMP~\cite{amp}. In this setting, AMP accelerates the training by computing the convolutions using 16-bit floating point numbers. We further apply \method upon AMP to compress the saved activation, reducing the memory footprint. The result is shown in Table~\ref{tab:amp}. \method works well with AMP, and the accuracy is not affected. 

We also conduct an ablation study of the gradient estimation strategy discussed in Sec.~\ref{sec:runtime}, which is used for \method (L3). The result is shown in Fig.~\ref{fig:ablation_grad}, where ``Stale'' estimates the gradient magnitude with stale gradients, and ``Moving Average'' averages the gradient across training samples. Both strategies work well in practice, and there is not perceivable differences. We use the ``Moving Average'' strategy in all our experiments for its simplicity.

\begin{figure}[t]
    \centering
    \includegraphics[width=.8\mywidth]{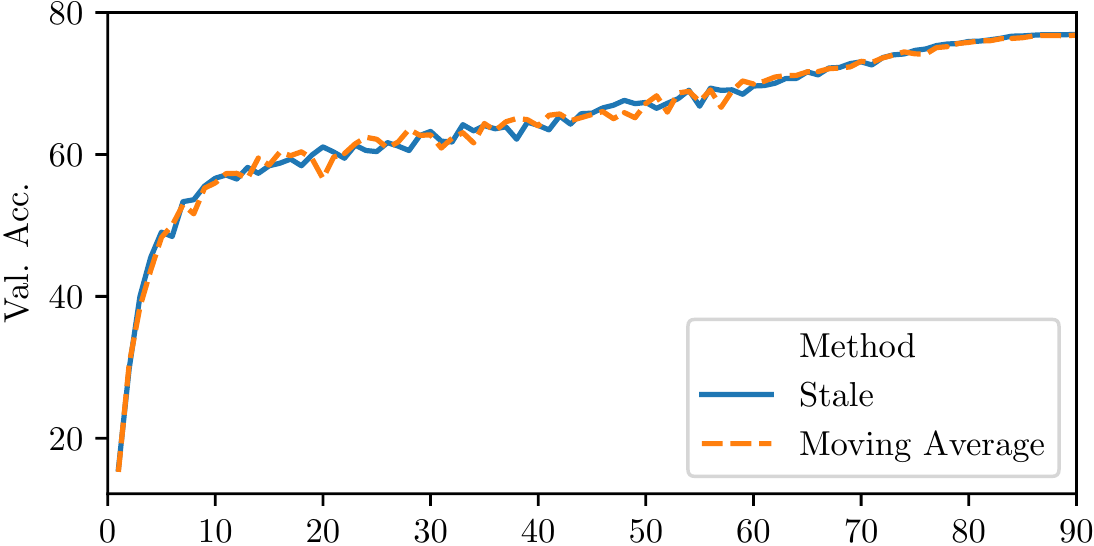}
    \caption{Ablation study on the gradient  estimation strategy.}
    \label{fig:ablation_grad}
    \vspace{-1em}
\end{figure}



\begin{table}[t]
	\centering
	\caption{\small ResNet-50 on ImageNet. N/A: not available; Div.: diverge; ``-'': skipped since lower precision achieves lossless results.\label{tab:imagenet}}
	\resizebox{0.9\mywidth}{!}{
		\begin{tabular}{ccccccc}
			\toprule
			Bits     &  32 & 4 & 3 & 2 & 1.5 & 1.25 \\
			\midrule
			FP     &  \textbf{77.1} &  N/A & N/A & N/A & N/A & N/A \\
			BLPA & N/A &  \textbf{76.6} & Div. & Div. & N/A & N/A \\
			\midrule
			\method (L2) & N/A & - & \textbf{77.4} & 0.1 & N/A & N/A \\
			\method (L2.5) & N/A & - & - & \textbf{77.1} & 75.9 & 75.1 \\
			\method (L3) & N/A & - & - & \textbf{76.9} & 76.4 & 75.9 \\
			\bottomrule
	\end{tabular}}
\end{table}

\begin{table}[t]
	\centering
	\caption{\small \method with mixed precision training. Act. Bits indicates the average number of bits for the saved activation.\label{tab:amp}}
	\small
		\begin{tabular}{ccc}
			\toprule
			Method & Act. Bits & Val. Acc. \\
			\midrule
			FP     & 32  & 77.1 \\
			AMP     & 16 & 77.3 \\
			\method (L2) + AMP & 4 & 77.1 \\
			\method (L3) + AMP & 2 & 76.9 \\
			\bottomrule
	\end{tabular}
\end{table}

\subsection{Memory Saving and Computational Overhead}

\begin{table}[t]
	\centering
	\caption{\small Memory usage before backward pass. ``Total Mem.'' includes model parameters, optimizer states, input data and activation. ``Act. Mem.'' includes activation for all layers except the final loss layer. R=memory saving ratio; OOM=out of memory.}
	\label{tab:memory_analysis}
	\resizebox{\mywidth}{!}{
		\begin{tabular}{c|c|ccc|ccc}
			\toprule
			\multirow{2}{*}{Network} & \multirow{2}{*}{Batch} & \multicolumn{3}{c|}{Total Mem. (GB)} & \multicolumn{3}{c}{Act. Mem. (GB)} \\
			&                     & FP & \method(L3) & R & FP & \method (L3) & R\\
			\midrule
			\multirow{4}{*}{ResNet-152} & 32    & 6.01   & 1.18 & 5$\times$  & 5.28    & 0.44 & 12$\times$\\
			& 64    & 11.32  & 1.64 & 7$\times$  & 10.57   & 0.88 & 12$\times$\\
			& 96    & OOM    & 2.11 & / & OOM     & 1.32 & /\\
			& 512   & OOM    & 8.27 & / & OOM     & 7.01 & / \\
			\midrule
			\multirow{4}{*}{FCN-HR-48}       & 2   & 5.76   & 1.39 & 4$\times$  & 4.76  & 0.39 & 12$\times$ \\
			& 4   & 10.52  & 1.79 & 6$\times$  & 9.52  & 0.79 & 12$\times$\\
			& 6   & OOM    & 2.17 & / & OOM   & 1.18  & /\\
			& 20  & OOM    & 4.91 & / & OOM   & 3.91  & / \\
			\bottomrule
		\end{tabular}
	}
\end{table}

\begin{figure}[t]
	\centering
	\begingroup
	\setlength{\tabcolsep}{0pt} 
	\renewcommand{\arraystretch}{1} 
	\scriptsize
	\begin{tabular}{m{1.5cm}m{8cm}}
		\begin{minipage}{\linewidth}(a)\\ResNet-50\end{minipage} &\includegraphics[width=.85\linewidth]{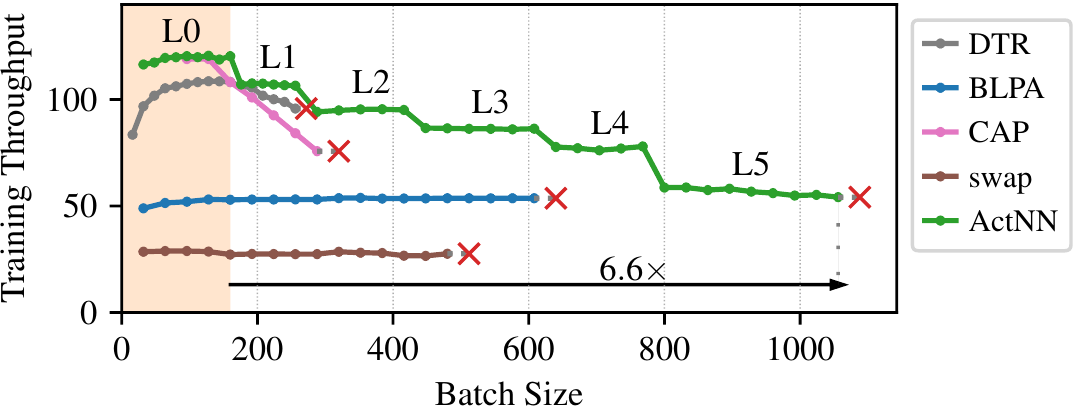}\\
		\begin{minipage}{\linewidth}(b)\\WideResNet-101\end{minipage}
		&\includegraphics[width=.85\linewidth]{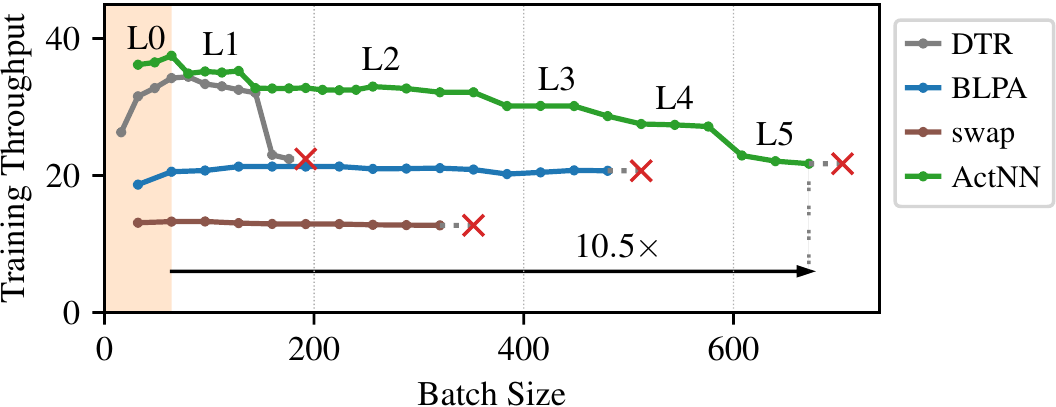}\\
		\begin{minipage}{\linewidth}(c)\\DenseNet-201\end{minipage}
		&\includegraphics[width=.85\linewidth]{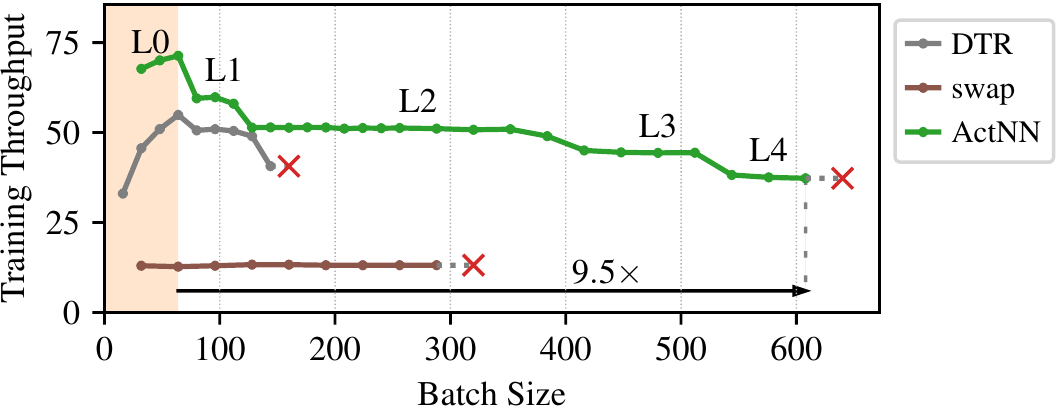}\\
	\end{tabular}
	\endgroup
\caption{Training throughput vs batch size. Red cross mark means out-of-memory. The shaded yellow region denotes the possible batch sizes with full precision training given the memory budget.
}
\label{fig:throughput_batch_size}
\end{figure}

Next, we measure the memory saving and the overhead introduced by \method.
The experiments are done with PyTorch v1.7 and an AWS \texttt{g4dn.4xlarge} instance, which has a 16GB NVIDIA T4 GPU and 64GB CPU memory.


Tab.~\ref{tab:memory_analysis} shows the memory usage right before backward pass. The activation memory reaches its peak at this point.
For both models, activations consume over 90\% of the total memory. This justifies the potential of activation compressed training.
\method compresses activation memory by $12\times$. This matches the theoretical value.
Take a Conv-BN-ReLU block as example, FP takes 32 bits (Conv) + 32 bits (BN) = 64 bits, while \method takes 2.125 bits (Conv) + 2.125 bits (BN) + 1 bit (ReLU) = 5.25 bit. 
The extra 0.125 bit is used to store the zero point and scale for each group.
The compression ratio is $64/5.25\approx 12$.

We compare the training throughput of \method against other memory saving systems in Fig.~\ref{fig:intro} and Fig. ~\ref{fig:throughput_batch_size}.
Each curve shows the trade-off between memory saving and training speed for one system.
``DTR'' is dynamic tensor rematerialization ~\cite{kirisame2020dynamic}, a state-of-the-art rematerialization method for dynamic graphs. DTR runs out-of-memory very soon when trying to increase the batch size.
``BLPA'' is the system implemented in \citet{chakrabarti2019backprop}. It only supports a dedicated ResNet architecture, so we cannot run it on DenseNet. Further, its dedicated design is not compatible with fused high-performance kernel libraries, so its training speed is very slow.
``CAP'' is the Capuchin system based on swapping and recomputation ~\cite{peng2020capuchin}. It is not open-source, so we get relative overhead numbers from its paper and compute the scaled training throughput. Because our system runs in dynamic graph mode, we use Capuchin's numbers in eager mode as a fair comparison.
``swap'' is a simple swapping strategy that swaps all activations to the CPU. However, the CPU memory is finite, so it cannot increase the batch size unlimitedly.
``\method'' is our system with 6 different levels of optimization, so the curve of \method shows roughly 6 segments. As shown in the figures, \method achieves a much larger maximum batch size and extends the Pareto frontier significantly over state-of-the-art systems.
\method enables training with a $6.6\times - 14.0\times$ larger batch size under the same memory budget.

\begin{table}[]
	\centering
	\caption{Comparison of the largest models \method can train before out-of-memory with the same batch size (64). D = depth = the number of layers, W = width = the base width of the bottleneck block, R = resolution = width and height of input images.}
	\label{tab:max_model}
	\resizebox{\mywidth}{!}{
		\begin{tabular}{c|ccc|ccc}
			\toprule
			\multirow{2}{*}{Dim.}  &  \multicolumn{3}{c|}{Maximum Value}  & \multicolumn{3}{c}{Training Throughput  (TFLOPS)}  \\
			& FP & \method (L3)  & \method (L4) & FP & \method (L3)  & \method (L4) \\
			\midrule
			D  &  160  & 660  & 1016  &   0.59   & 0.46  & 0.38  \\
			W  &  92   & 332  & 340   &   0.70   & 1.07  & 1.09  \\
			R  &  240  & 636  & 740   &   0.59   & 0.46  & 0.42   \\
			\bottomrule
	\end{tabular}}
\end{table}

To show the potential of training larger models, we scale a ResNet-152 to deeper, wider, or higher resolution.
\autoref{tab:max_model} compares the largest model we can train against full precision.
With the same memory budget and batch size (64), ActNN can scale a ResNet-152 to $6.4 \times$ deeper,
$3.7 \times$ wider or $3.1 \times $ higher resolution, while maintaining $64 \% - 155 \%$ original training throughput.

\subsection{Segmentation and Detection}
Here, we report results for high-resolution segmentation and detection in Tab.~\ref{tab:segmentation}. 
Activation memory is a more severe problem for these tasks, as the activation memory scales quadratically with the image resolution. 
For all the models, \method converges within $0.5\%$ mIoU / AP comparing with the full-precision baseline.
It worth noticing that \method could finish the default training recipe, batch size of 16/8 for detection/segmentation, within only one GPU.
Remarkably, by training with a larger batch size of 160, \method gains $1.8\%$/$1.4\%$ mIoU for Dilation8 (dilated FCN)~\cite{shelhamer2017fully} / FPN~\cite{kirillov2019panoptic}. 
This gain comes from the more reliable estimation of normalization parameters~\cite{peng2018megdet} with large batch size. 
Without memory saving techniques, such a large batch size is only achievable with a cluster of machines. 

\begin{table}
\caption{
Semantic segmentation on Cityscapes and object detection on Coco.  Models: HRNetV2W48 (HRNet)~\cite{wang2020deep}, 
ResNet-50 dilation 8 (Dilation8), FPN~\cite{kirillov2019panoptic}, RetinaNet~\cite{lin2017focal}. 
}
\label{tab:segmentation}
\vskip 0.15in
\begin{center}
\resizebox{0.9\mywidth}{!}{
\begin{tabular}{lllccccc}
\toprule
Task & Model & Method & Bits &  Batch &  mIoU / AP \\
\midrule
\multirow{9}{*}{Seg.} & HRNet & FP & 32 &  8 & 80.65 \\
& HRNet & \method (L3)  &2 & 8 &  81.02 \\
& HRNet & \method (L3) &2 &  160 &  80.31 \\
\cmidrule{2-6}
& Dilation8 & FP& 32 & 8 &  73.61 \\
& Dilation8 & \method (L3)&2 & 8 &  72.92 \\
& Dilation8 & \method (L3)&2 & 160 &  75.85 \\
\cmidrule{2-6}
& FPN & FP & 32& 8 &  74.52 \\
& FPN & \method (L3)&2 & 8 &  74.18 \\
& FPN & \method (L3)&2 & 160 &  75.98 \\
\midrule
\multirow{3}{*}{Det.} & RetinaNet & FP& 32 & 16 & 36.5\\
& RetinaNet & \method (L3)&2 & 16 & 36.2 \\
& RetinaNet & \method (L3)&2 & 80 & 36.0  \\
\bottomrule
\end{tabular}
}
\end{center}
\vskip -0.1in
\end{table}

\subsection{Self-supervised Learning}
Here, we test \method for two self-supervised learning methods, MoCov2~\cite{chen2020mocov2} and BYOL~\cite{grill2020bootstrap}. As the contrastive loss used in these methods involves comparing pairs of examples in a batch, larger batch size gives more accurate estimation of the contrastive loss, and has positive impact to the quality of the learned representation~\cite{chen2020simple}.

We directly apply \method (L3) to MoCov2 and BYOL. Both methods use ResNet-50 as the backbone. MoCov2~\cite{chen2020mocov2} is trained for 200 epochs and it uses the last layer's feature after global pooling for evaluation; BYOL~\cite{grill2020bootstrap} is trained for 300 epochs and it combines multiple layers' features for evaluation. As shown in Table~\ref{tab:ssl}, \method can train the models with significantly larger batch size per GPU, and it achieves good validation accuracy using only 2-bit activations. 
\begin{table}
\caption{
Self-supervised learning on ImageNet. 
}
\label{tab:ssl}
\vskip 0.15in
\begin{center}
\resizebox{0.9\mywidth}{!}{
\begin{tabular}{llcccccc}
\toprule
Model & Method & Bits &  Batch & GPUs &  Val. Acc. \\
\midrule
MoCov2 & FP & 32 & 256 & 8 & 67.69 \\
MoCov2 & \method (L3) & 2 & 512 & 2 & 67.25 \\
BYOL & FP & 32 & 256 & 8 & 72.35 \\ 
BYOL & \method (L3) & 2 & 1024 & 8 & 72.65 \\ 
\bottomrule
\end{tabular}
}
\end{center}
\vskip -0.1in
\end{table}

\section{Conclusions}
We have presented \method, a framework for training neural networks with randomly quantized activations. 
\method is grounded by the convergence guarantee for general network architectures that we provide. 
Quantization affects the convergence through the gradient variance. 
We propose per-group quantization and fine-grained mixed-precision quantization strategies, which approximately minimizes the gradient variance during training. 
On a wide range of tasks, \method achieves negligible accuracy loss with 2-bit activation, improving significantly over prior state-of-the-arts. 
\method can be readily applied as a collection of layers in PyTorch, and it enables up to $14\times$ batch size, $6.4\times$ deeper, $3.7\times$ wider, or $3.1\times$ higher-resolution models. 

\section*{Acknowledgements}
This work was supported by a gracious fund from Intel corporation, Berkeley Deep Drive (BDD), and Berkeley AI Research (BAIR) sponsors. In addition to NSF CISE Expeditions Award CCF-1730628, this research is supported by gifts from Amazon Web Services, Ant Group, Ericsson, Facebook, Futurewei, Google, Intel, Microsoft, Nvidia, Scotiabank, Splunk and VMware. We would like to thank the Intel VLAB team for providing us with access to their computing cluster. We also gratefully acknowledge the support of NVIDIA Corporation for their donation of two Titan Xp GPU used for this research. We would also like to acknowledge the UC Berkeley CLTC, ARO, DARPA, IARPA, NSF, and ONR for providing partial support of this work. Our conclusions do not necessarily reflect the position or the policy of our sponsors, and no official endorsement should be inferred.

\clearpage
\bibliography{refs}

\ifisarxiv
\bibliographystyle{unsrt}
\else
\bibliographystyle{icml2021}
\fi

\newpage
\appendix
\onecolumn
\section{Proof of the Theorems}\label{sec:theorems}
\subsection{Table of Notations}
\begin{table}[h]
\centering
\caption{Table of Notations.}
\begin{tabular}{cc}
\toprule
\textbf{Notation}     &  \textbf{Description} \\
\midrule
$\Xv$     & A batch of inputs (each row is a sample) \\
$\Yv$     & A batch of labels (each row is a sample) \\
$\Bc$     & A batch $\Bc=(\Xv, \Yv)$ \\
$N, L$    & Batch size, number of classes, and number of layers \\
$\Fv^{(l)}(\cdot; \Thetav^{(l)})$ & Forward function of the $l$-th layer with parameter $\Thetal{l}$ \\
$\Gv^{(l)}(\cdot; \cdot)$ & Backward function of the $l$-th layer \\
$\Cv(\Hl{l-1}, \Thetal{l}), \Cv^{(l)}$ & $l$-th layer's context \\
$\Cv(\Hl{l-1}, \Thetal{l}), \hat\Cv^{(l)}$ & $l$-th layer's compressed context \\
$\Lc=\ell(\Hl{L}, \Yv)$ & Minibatch loss function of prediction $\Hl{L}$ and label $\Yv$. \\
$\Lc_{\Dc}$ & Batch loss on the entire dataset. \\
$\nabla_{\Thetav} \Lc_{\Dc}$ & Batch gradient \\
$\nabla_{\Hl{l}}, \nabla_{\Thetal{l}}$ & Full-precision gradient of activation / parameter \\
$\hn_{\Hl{l}}, \hn_{\Thetal{l}}$ & Activation-compressed gradient of activation / parameter \\
$b_n^{(l)}, B_n^{(l)}$ & Number of quantization bits / bins for $\hv_n^{(l)}$\\
$G$ & Group size for per-group quantization \\
$R, R_{ni}, \Rv$ & Quantization range\\
\bottomrule
\end{tabular}
\label{tab:notation}
\end{table}


\subsection{Theorem~\ref{thm:bias}}
The FP gradient is defined by the recursion
\begin{align*}
\nabla_{\Hl{l-1}}, \nabla_{\Thetal{l}}
= \G{l}{\nabla_{\Hl{l}}, \Cv(\Hl{l-1}, \Thetal{l})},
\end{align*}
and the \method gradient is defined by 
\begin{align*}
\hn_{\Hl{l-1}}, \hn_{\Thetal{l}}
= \G{l}{\hn_{\Hl{l}}, \hat\Cv(\Hl{l-1}, \Thetal{l})},
\end{align*}
where $\hn_{\Hl{L}}=\nabla_{\Hl{L}}$. 
The batch loss is $\Lc_{\Dc}(\Thetav)$ and the batch gradient is $\nabla_{\Thetav} \Lc_{\Dc}(\Thetav)$. 
Assume that the FP gradient is unbiased, i.e., $\E{\nabla_{\Thetal{l}}}=\nabla_{\Thetal{l}} \Lc_{\Dc}(\Thetav)$ for all $l$. 

We first prove the following lemma
\begin{lemma}\label{lemma}
If $\E{\hn_{\Hl{l}}}=\E{\nabla_{\Hl{l}}}$, then there exists $\hCl{l}$, s.t.,
$\E{\hn_{\Hl{l-1}}}=\E{\nabla_{\Hl{l-1}}}$ and
$\E{\hn_{\Thetal{l}}}=\E{\nabla_{\Thetal{l}}}$.
\end{lemma}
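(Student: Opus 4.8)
The plan is to prove the lemma by exploiting the \emph{bilinearity} of the backward map $\G{l}{\cdot,\cdot}$ together with the unbiasedness and independence of the quantizer. The first step is to make the informal construction after Theorem~\ref{thm:bias} precise at the level of a single layer: following the chain-rule expansion in Eq.~(\ref{eqn:linear-bp-2}), I take the exact context $\Cl{l}$ to be the collection of Jacobian entries $\{\partial H^{(l)}_{kl}/\partial H^{(l-1)}_{ij},\,\partial H^{(l)}_{kl}/\partial \Theta^{(l)}_{i}\}$ and choose the compressed context $\hCl{l}=Q(\Cl{l})$, where $Q$ is an unbiased stochastic quantizer (e.g.\ stochastic rounding) drawn with \emph{fresh} randomness. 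With this choice $\G{l}{\cdot,\cdot}$ is linear in each argument: there exist linear-in-context maps $M_H(\cdot)$ and $M_\Theta(\cdot)$ with $\hn_{\Hl{l-1}}=M_H(\hCl{l})\,\hn_{\Hl{l}}$ and $\hn_{\Thetal{l}}=M_\Theta(\hCl{l})\,\hn_{\Hl{l}}$, and the same maps applied to $\Cl{l}$ and $\nabla_{\Hl{l}}$ reproduce the FP recursion. This is exactly the existence claim the lemma asks for.

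The crucial bookkeeping is to separate the two sources of randomness. The incoming gradient $\hn_{\Hl{l}}$ is a function of the minibatch $\Bc$ and of the quantization noise injected at layers $l+1,\dots,L$, whereas $\hCl{l}$ is a function of $\Bc$ (through $\Cl{l}$) and of the fresh noise $\xi_l$ used at layer $l$. I therefore condition on the minibatch $\Bc$. Conditioned on $\Bc$, the context $\Cl{l}$ is deterministic, the quantizer is conditionally unbiased so that $\Econd{\hCl{l}}{\Bc}=\Cl{l}$, and — because $\xi_l$ is independent of the downstream quantization — the factors $\hCl{l}$ and $\hn_{\Hl{l}}$ are conditionally independent given $\Bc$. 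I will carry the slightly stronger conditional statement $\Econd{\hn_{\Hl{l}}}{\Bc}=\nabla_{\Hl{l}}$ through the induction (taking a final $\E{\cdot}$ over $\Bc$ recovers the unconditional equalities stated in the lemma), since this is what the recursion actually propagates.

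With this setup the inductive step is essentially a one-line computation: by bilinearity and conditional independence,
\[
\Econd{\hn_{\Hl{l-1}}}{\Bc}
= \Econd{M_H(\hCl{l})}{\Bc}\,\Econd{\hn_{\Hl{l}}}{\Bc}
= M_H(\Cl{l})\,\nabla_{\Hl{l}}
= \nabla_{\Hl{l-1}},
\]
where the middle equality uses $\Econd{\hCl{l}}{\Bc}=\Cl{l}$ and the hypothesis $\Econd{\hn_{\Hl{l}}}{\Bc}=\nabla_{\Hl{l}}$, and the last equality is the FP recursion. The identical argument with $M_\Theta$ gives $\Econd{\hn_{\Thetal{l}}}{\Bc}=\nabla_{\Thetal{l}}$, and averaging over $\Bc$ yields the lemma. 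Chaining it from the base case $l=L$ (where $\hn_{\Hl{L}}=\nabla_{\Hl{L}}$) down to $l=1$, and then combining with the assumed unbiasedness $\E{\nabla_{\Thetav}}=\nabla_{\Thetav}\Ld(\Thetav)$ of the FP gradient, will establish Theorem~\ref{thm:bias}.

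I expect the main obstacle to be the careful justification of the conditional independence of $\hCl{l}$ and $\hn_{\Hl{l}}$, i.e.\ that the expectation of their product factors. This is precisely where a naive attempt to factor the \emph{unconditional} expectation would fail, since $\hCl{l}$ and $\hn_{\Hl{l}}$ share the common randomness of the minibatch; conditioning on $\Bc$ and insisting that each layer's quantizer use independent fresh randomness is what makes the product separate. A secondary technical point is verifying that storing and quantizing Jacobian entries (rather than raw activations) keeps $\G{l}$ genuinely linear in the context for the relevant layers — the reductions that make this affordable in practice are deferred to Appendix~\ref{sec:layers}.
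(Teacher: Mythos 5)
Your proposal is correct and takes essentially the same route as the paper: quantize the Jacobian entries $\{\partial H^{(l)}_{kl}/\partial H^{(l-1)}_{ij},\,\partial H^{(l)}_{kl}/\partial \Theta^{(l)}_{i}\}$ with an unbiased quantizer, exploit the linearity of the chain-rule expansion in Eq.~(\ref{eqn:linear-bp-2}) to factor the expectation of the product into the product of expectations, and induct from $l=L$ down. Your explicit conditioning on the minibatch $\Bc$ and insistence on fresh, independent quantization noise at each layer is in fact \emph{more} careful than the paper's argument, which factors $\Eb[Q(\cdot)\,\hn_{H^{(l)}_{kl}}]$ unconditionally and treats the Jacobians as deterministic --- a step that is only literally valid given $\Bc$ --- so your refinement tightens rather than departs from the published proof.
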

\begin{proof}
By the chain rule of differentiation, we have 
\begin{align}
\nabla_{H^{(l-1)}_{ij}} = \sum_{kl} \tfrac{\partial H^{(l)}_{kl}}{\partial H^{(l-1)}_{ij}} \nabla_{H^{(l)}_{kl}},~~~~~
\nabla_{\Theta^{(l)}_{i}} = \sum_{kl} \tfrac{\partial H^{(l)}_{kl}}{\partial \Theta^{(l)}_{i}} \nabla_{H^{(l)}_{kl}}.
\end{align}
Therefore, we can write 
\begin{align*}
\G{l}{\nabla_{\Hl{l}}, \Cv(\Hl{l-1}, \Thetal{l})}
= \{\sum_{kl} \tfrac{\partial H^{(l)}_{kl}}{\partial H^{(l-1)}_{ij}} \nabla_{H^{(l)}_{kl}}\}_{ij}, 
\{\sum_{kl} \tfrac{\partial H^{(l)}_{kl}}{\partial \Theta^{(l)}_{i}} \nabla_{H^{(l)}_{kl}}\}_i,
\end{align*}
where $\Cv(\Hl{l-1}, \Thetal{l})=\{\sfrac{\partial H^{(l)}_{kl}}{\partial H^{(l-1)}_{ij}}, \sfrac{\partial H^{(l)}_{kl}}{\partial \Theta^{(l)}_{i}}\}_{ijkl}$.
Let $\hat\Cv(\Hl{l-1}, \Thetal{l})=Q(\Cv(\Hl{l-1}, \Thetal{l}))$, where
$Q(\cdot)$ is an unbiased quantizer, i.e., for all $\xv$, $\E{Q(\xv)}=\xv$. Then, we have
\begin{align*}
&\E{\hn_{\Hl{l-1}}, \hn_{\Thetal{l}}}=
\E{\G{l}{\hn_{\Hl{l}}, \hat\Cv(\Hl{l-1}, \Thetal{l})}}
=\E{\{\sum_{kl} Q(\tfrac{\partial H^{(l)}_{kl}}{\partial H^{(l-1)}_{ij}}) \hn_{H^{(l)}_{kl}}\}_{ij},  
	\{\sum_{kl} Q(\tfrac{\partial H^{(l)}_{kl}}{\partial \Theta^{(l)}_{i}}) \hn_{H^{(l)}_{kl}}\}_i}\\
=& \{\sum_{kl} \Eb Q(\tfrac{\partial H^{(l)}_{kl}}{\partial H^{(l-1)}_{ij}}) \Eb \hn_{H^{(l)}_{kl}}\}_{ij},  
\{\sum_{kl} \Eb Q(\tfrac{\partial H^{(l)}_{kl}}{\partial \Theta^{(l)}_{i}}) \Eb \hn_{H^{(l)}_{kl}}\}_i
= \{\sum_{kl} \tfrac{\partial H^{(l)}_{kl}}{\partial H^{(l-1)}_{ij}} \nabla_{H^{(l)}_{kl}}\}_{ij},  
\{\sum_{kl} \tfrac{\partial H^{(l)}_{kl}}{\partial \Theta^{(l)}_{i}} \nabla_{H^{(l)}_{kl}}\}_i\\
=& \G{l}{\nabla_{\Hl{l}}, \Cv(\Hl{l-1}, \Thetal{l})} = \nabla_{\Hl{l-1}}, \nabla_{\Thetal{l}}.
\end{align*}
\end{proof}

Now we can prove Theorem~\ref{thm:bias}.
\begin{proof}
By definition, $\hn_{\Hl{L}}=\nabla_{\Hl{L}}$, so $\E{\hn_{\Hl{L}}}=\E{\nabla_{\Hl{L}}}$. Using Lemma~\ref{lemma} and mathematical induction, we get 
$\E{\hn_{\Thetal{l}}}=\E{\nabla_{\Thetal{l}}}$, for all $l\in \{1, \dots, L\}$, so $\E{\hn_{\Thetav}}=\E{\nabla_{\Thetav}}$.

On the other hand, $\E{\nabla_{\Thetav}}=\nabla \Lc_{\Dc}(\Thetav)$, by the assumption. Put it all together, we have 
$\E{\hn_{\Thetav}} = \nabla \Lc_{\Dc}(\Thetav)$.
\end{proof}

\subsection{Theorem~\ref{thm:convergence}}
Let $\Thetav_t=\{\Thetal{l}\}_{l=1}^L$ be a flattened vector of the parameters at the $t$-th iteration, and $\hn_{\Thetav_t}=\{\hn_{\Thetav_t^{(l)}}\}_{l=1}^L$ be the corresponding AC gradient. For any vector $\xv$, let $\Var{\xv}:=\Eb\norm{\xv}^2 - \norm{\E{\xv}}^2$. Let $\Lc(\Thetav_t)$ be the batch loss at the $t$-th iteration, where $\nabla_{\Thetav} \Lc(\Thetav_t)=\E{\hn_{\Thetav_t}}$. Assume the SGD iteration $\Thetav_{t+1}\leftarrow \Thetav_{t}-\alpha\hn_{\Thetav_t}$. Further, let $\Econd{\cdot}{t}$ and $\Varcond{\cdot}{t}$ be the expectation and variance taken only over the minibatch and random quantizations at the $t$-th iteration. 
\begin{proof}
According to~\citet{bottou2018optimization}, A1 implies that for any $\Thetav_t$ and $\Thetav_{t+1}$,
\begin{align*}
\Lc(\Thetav_{t+1}) - \Lc(\Thetav_t) \le \nabla\Lc(\Thetav_t)^\top(\Thetav_{t+1}-\Thetav_t) + \frac{1}{2}\beta\norm{\Thetav_{t+1}-\Thetav_t}^2.
\end{align*}
Plugging the SGD iteration, we have
\begin{align*}
\Lc(\Thetav_{t+1}) - \Lc(\Thetav_t) \le  -\alpha\Lc(\Thetav_t)^\top \hn_{\Thetav_t} + \frac{1}{2}\alpha^2\beta\norm{\hn_{\Thetav_t}}^2,
\end{align*}
taking expectation w.r.t. iteration $t+1$ on both sides, and utilizing A3, the definition of variance, the step size inequality, and the unbiased AC gradient, 
\begin{align*}
\Econd{\Lc(\Thetav_{t+1})}{t+1} - \Lc(\Thetav_t) &\le  -\alpha\norm{\Lc(\Thetav_t)}^2 + \frac{1}{2}\alpha^2\beta (\Varcond{\hn_{\Thetav_t}}{t+1} + \norm{\Econd{\hn_{\Thetav_t}}{t+1}}^2)\\
&= -\alpha(1 - \frac{1}{2}\alpha\beta)\norm{\Lc(\Thetav_t)}^2+\frac{1}{2}\alpha^2\beta\sigma^2\\
&\le -\frac{1}{2}\alpha\norm{\Lc(\Thetav_t)}^2+\frac{1}{2}\alpha^2\beta\sigma^2.
\end{align*}
Taking expectation on both sides, we have
\begin{align}
\E{\Lc(\Thetav_{t+1})} - \E{\Lc(\Thetav_t)} \le -\frac{1}{2}\alpha\Eb\norm{\Lc(\Thetav_t)}^2+\frac{1}{2}\alpha^2\beta\sigma^2.
\label{eqn:thm-recursion}
\end{align}
Summing Eq.~(\ref{eqn:thm-recursion}) up across iterations $\{1, \dots, T\}$, and utilize A2, we have
\begin{align*}
L_{\inf} - \Lc(\Thetav_1)\le  \E{\Lc(\Thetav_{T+1})} - \E{\Lc(\Thetav_1)} \le -\frac{1}{2}\alpha\sum_{t=1}^T \Eb\norm{\Lc(\Thetav_t)}^2+\frac{1}{2}T\alpha^2\beta\sigma^2.
\end{align*}
Rearrange the terms, we have 
\begin{align*}
\frac{1}{T}\sum_{t=1}^T \Eb\norm{\Lc(\Thetav_t)}^2 \le \frac{2(\Lc(\Theta_1) - \Lc_{inf})}{\alpha T} + \alpha\beta\sigma^2.
\end{align*}
Viewing $t$ as a random variable, we have Eq.~(\ref{eqn:thm-1}).

\end{proof}

\subsection{Theorem~\ref{thm:grad-var}}
Let $\Econd{X}{Y}$ and $\Varcond{X}{Y}$ be the conditional expectation / variance of $X$ given $Y$. We use the following proposition.
\begin{proposition}
(Law of Total Variance)
$$\Var{X}=\E{\Varcond{X}{Y}} + \Var{\Econd{X}{Y}}.$$
\end{proposition}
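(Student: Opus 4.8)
The plan is to reduce the identity to two short computations, one for each summand on the right-hand side, using only the definition of (conditional) variance together with the tower property of conditional expectation, $\E{\Econd{X}{Y}}=\E{X}$. Since the variance in this paper is the vector form $\Var{X}=\E{\norm{X}^2}-\norm{\E{X}}^2$, I would run the argument with $\norm{\cdot}^2$ in place of the scalar square throughout; the algebra is identical to the familiar one-dimensional case.

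First I would expand the expected conditional variance. By the definition of conditional variance, $\Varcond{X}{Y}=\Econd{\norm{X}^2}{Y}-\norm{\Econd{X}{Y}}^2$. Taking an outer expectation and applying the tower property to the first term, $\E{\Econd{\norm{X}^2}{Y}}=\E{\norm{X}^2}$, yields
\[
\E{\Varcond{X}{Y}}=\E{\norm{X}^2}-\E{\norm{\Econd{X}{Y}}^2}.
\]

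Next I would expand the variance of the conditional expectation by treating $\Econd{X}{Y}$ as a random variable (a measurable function of $Y$) and applying the definition of variance directly to it: $\Var{\Econd{X}{Y}}=\E{\norm{\Econd{X}{Y}}^2}-\norm{\E{\Econd{X}{Y}}}^2$. Invoking the tower property once more gives $\E{\Econd{X}{Y}}=\E{X}$, so that
\[
\Var{\Econd{X}{Y}}=\E{\norm{\Econd{X}{Y}}^2}-\norm{\E{X}}^2.
\]

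Finally I would add the two displays. The common term $\E{\norm{\Econd{X}{Y}}^2}$ appears with opposite signs and cancels, leaving $\E{\norm{X}^2}-\norm{\E{X}}^2=\Var{X}$, which is exactly the claim. There is no genuine obstacle here, as this is a standard identity; the only point requiring care is that the tower property is invoked twice—once on the scalar random variable $\norm{X}^2$ and once on the vector $X$—and that the conditional expectation $\Econd{\cdot}{Y}$ is well defined, so that both terms are meaningful. Everything else is routine bookkeeping of the cancelling cross term.
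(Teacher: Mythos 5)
Your proof is correct: the paper states this proposition without proof (citing it as the standard law of total variance), and your derivation---expanding both right-hand terms via the definitions of variance and conditional variance, applying the tower property to $\norm{X}^2$ and to $X$, and cancelling the common term $\E{\norm{\Econd{X}{Y}}^2}$---is the standard argument, correctly adapted to the paper's vector-norm definition $\Var{\xv}=\Eb\norm{\xv}^2-\norm{\E{\xv}}^2$. Nothing is missing.
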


Define 
\begin{align*}
\GT{l\sim m}{\hn_{\Hl{m}}, \hat\Cv^{(m)}}
	&= \GT{l}{\GH{l+1}{\cdots
			\GH{m}{\hn_{\Hl{m}}, \hCl{m} }
			\cdots , \Cl{l+1}}, \Cl{l}}, \\
\GT{l\sim m}{\hn_{\Hl{m}}}
		&= \GT{l}{\GH{l+1}{\cdots
				\GH{m}{\hn_{\Hl{m}}, \Cl{m} }
			\cdots	, \Cl{l+1}}, \Cl{l}},
\end{align*}

\begin{proof}
(of Theorem~\ref{thm:grad-var})
First, we have 
\begin{align}\label{eqn:thm3-boundary}
\Var{\GT{l\sim L}{\hn_{\Hl{L}}}} = \Var{\GT{l\sim L}{\nabla_{\Hl{L}}}} = \Var{\nabla_{\Hl{l}}}.
\end{align}
For all $m<L$, by definition of $\hn_{\Hl{m}}$
\begin{align*}
\Var{\GT{l\sim m}{\hn_{\Hl{m}}}} = 
\Var{\GT{l\sim m}{ \GH{m+1}{ \hn_{\Hl{m+1}}, \hCl{m+1} } }},
\end{align*}
by law of total variance
\begin{align*}
&\Var{\GT{l\sim m}{ \GH{m+1}{ \hn_{\Hl{m+1}}, \hCl{m+1} } }}\\
=& \E{\Varcond{\GT{l\sim m}{ \GH{m+1}{ \hn_{\Hl{m+1}}, \hCl{m+1} } }}{\hn_{\Hl{m+1}}}}
+ \Var{\Econd{\GT{l\sim m}{ \GH{m+1}{ \hn_{\Hl{m+1}}, \hCl{m+1} } }}{\hn_{\Hl{m+1}}}}
\end{align*}
by definition of $\GT{l\sim m}{\hn_{\Hl{m}}, \hat\Cv^{(m)}}$, definition of $\GT{l\sim m}{\hn_{\Hl{m}}}$ and Theorem~\ref{thm:bias}
\begin{align}
= \E{\Varcond{\GT{l\sim m+1}{ \hn_{\Hl{m+1}}, \hCl{m+1}  }}{\hn_{\Hl{m+1}}}}
+ \Var{\GT{l\sim m+1}{ \hn_{\Hl{m+1}} } }.\label{eqn:thm3-recursion}
\end{align}
Combining Eq.~(\ref{eqn:thm3-boundary}) and Eq.~(\ref{eqn:thm3-recursion}), we have
\begin{align}\label{eqn:thm3-general}
\Var{\GT{l\sim m}{\hn_{\Hl{m}}}} = \Var{\nabla_{\Hl{l}}} + \sum_{j=m+1}^{L}\E{\Varcond{\GT{l\sim j}{ \hn_{\Hl{j}}, \hCl{j}  }}{\hn_{\Hl{j}}}}.
\end{align}

Similarly, by definition and the law of total variance,
\begin{align*}
&\Var{\hn_{\Thetal{l}}}=\Var{\GT{l}{\hn_{\Hl{l}},\hCl{l}}}
= \E{\Varcond{\GT{l}{\hn_{\Hl{l}},\hCl{l}}}{\hn_{\Hl{l}}}}
+ \Var{\Econd{\GT{l}{\hn_{\Hl{l}},\hCl{l}}}{\hn_{\Hl{l}}} }.
\end{align*}.
by definition of $\GT{l\sim m}{\hn_{\Hl{m}}, \hat\Cv^{(m)}}$, definition of $\GT{l\sim m}{\hn_{\Hl{m}}}$ and Theorem~\ref{thm:bias}
\begin{align}\label{eqn:thm3-onestep}
=& \E{\Varcond{\GT{l}{\hn_{\Hl{l}},\hCl{l}}}{\hn_{\Hl{l}}}}
+ \Var{\GT{l}{\hn_{\Hl{l}},\Cl{l}}} \\
=& \E{\Varcond{\GT{l\sim l}{\hn_{\Hl{l}},\hCl{l}}}{\hn_{\Hl{l}}}}
+ \Var{\GT{l\sim l}{\hn_{\Hl{l}}}}.
\end{align}
Plugging Eq.~(\ref{eqn:thm3-general}) into Eq.~(\ref{eqn:thm3-onestep}), we have 

\begin{align*}
&\Var{\hn_{\Thetal{l}}} = \E{\Varcond{\GT{l\sim l}{\hn_{\Hl{l}},\hCl{l}}}{\hn_{\Hl{l}}}} + 
\Var{\nabla_{\Hl{l}}} + \sum_{j=l+1}^{L}\E{\Varcond{\GT{l\sim j}{ \hn_{\Hl{j}}, \hCl{j}  }}{\hn_{\Hl{j}}}}\\
=& \Var{\nabla_{\Hl{l}}} + \sum_{m=l}^{L}\E{\Varcond{\GT{l\sim m}{ \hn_{\Hl{m}}, \hCl{m}  }}{\hn_{\Hl{m}}}}.
\end{align*}
\end{proof}

\section{Bias and Variance for Individual Layers}\label{sec:layers}

\subsection{Convolutional Layers}
Consider an arbitrary dimensional  convolutional layer
\begin{align}
\yv_{nia} = \sum_{\Delta_i}  \Wv_{\Delta_i,a}\xv_{n,si+d\Delta_i,a}.
\label{eqn:conv-definition}
\end{align}
where $\Xv=\{\xv_{nia}\}$ is the input, $\Yv=\{\yv_{nia}\}$ is the output, $s$ is a stride, and $d$ is a dilation, $a\in [0, A)$ is the group index for depthwise-separable convolution. 
$\xv_{ni}$ is the feature vector of the $n$-th sample at the $i$-th location, where $i$ can be a tuple of arbitrary dimensions, e.g., 2 or 3.
The kernel has $K=|\Delta_i|$ locations, and for each location, the kernel $\Wv_{\Delta_i,a}$ is a $(D_{out}/A)\times (D_{in}/A)$ matrix.

The gradients are
\begin{align}
\nabla_{\Wv_{\Delta_i},a} = 
\sum_{ni} \nabla_{\yv_{nia}} \xv_{n,si+d\Delta_i,a}^\top,~~~~\nabla_{\xv_{nia}}=\sum_{\Delta_i}\nabla_{\yv_{n,(i-d\Delta_i)/s,a}} \Wv_{\Delta_i,a}^\top.
\label{eqn:conv-gradient}
\end{align}
Define the approximate context as $\hat\Cv=(Q(\Xv), \Wv)$, where $Q(\cdot)$ is an unbiased quantizer. Then,
\begin{align*}
\E{\hn_{\Wv_{\Delta_i,a}}}=\sum_{ni} \E{\hn_{\yv_{nia}}}\E{Q(\xv_{n,si+d\Delta_i,a}^\top)}
= \sum_{ni}\nabla_{\yv_{nia}}\xv_{n,si+d\Delta_i,a}^\top.=\E{\nabla_{\Wv_{\Delta_i, a}}}.
\end{align*}
Therefore, the gradient is unbiased.

Let $I$ be the number of locations (pixels) on the feature map $\Xv$, and $S$ is the product of strides. The variance is 
\begin{align*}
\Var{\sum_{ni} \nabla_{\yv_{nia}} \xv_{n,si+d\Delta_{i},a}^\top} = \sum_{c_1c_2}\Var{\sum_{ni} \nabla_{y_{n,i,a,c_1}} x_{n,si+d\Delta_{i},a,c_2}}
\end{align*}
Due to independence,
\begin{align*}
\Var{\sum_{ni} \nabla_{\yv_{nia}} \xv_{n,si+d\Delta_{i},a}^\top} =& \sum_{c_1c_2ni} \nabla_{y_{n,i,a,c_1}}^2 \Var{x_{n,si+d\Delta_{i},a,c_2}}\\
=& \frac{G}{6} \sum_{ni} \norm{\nabla_{\yv_{n,i,a}}}^2  \norm{\Rv_{n,si+d\Delta_i,a}}^2 \approx \frac{G}{6I} \sum_{n} \norm{\nabla_{\yv_{na}}}^2  \norm{\Rv_{na}}^2,
\end{align*}
where we approximate $\sum_i a_ib_i\approx \mathrm{sum}(a_i)\mathrm{mean}(b_i)$.
Therefore, 
\begin{align*}
\Var{\nabla_{\Wv}} = \sum_{\Delta_i,a} \nabla_{\Wv_{\Delta_i,a}} \approx 
\frac{GK}{6I} \sum_{na} \norm{\nabla_{\yv_{na}}}^2  \norm{\Rv_{na}}^2
\approx \frac{GK}{6IA}\sum_n\norm{\nabla_{\yv_{n}}}^2  \norm{\Rv_{n}}^2.
\end{align*}

\noindent\textbf{Transposed Convolution} We can view transposed convolution as convolutions with inverse stride. For example, if a Conv2D has the stride $[2, 2]$, then its transpose has the stride $[1/2, 1/2]$.

\subsection{Normalization Layers}

Suppose the input is $\Xv\in \Rb^{N\times C}$, where there are $C$ features to be normalized. The layer has the weight $\wv\in \Rb^{C}$ and the bias $\bv\in \Rb^{C}$. The output is $\Yv\in \Rb^{N\times C}$. For example, for \texttt{BatchNorm2d}, $C$ is the number of channels, and $N$ is the product of the batch size and the number of pixels per image. This formulation applies to both batch normalization and layer normalization, for arbitary-shape tensors. 

\noindent\textbf{Forward Propagation}
\begin{align}\label{eqn:bn-forward}
y_{nc} = (x_{nc}-m_c)\frac{w_c}{s_c} + b_c, \mbox{ where } m_c=\frac{1}{N}\sum_{n}x_{nc},  s_c=\sqrt{\frac{1}{N}\sum_n\left(x_{nc}-m_c\right)^2}.
\end{align}

\noindent\textbf{Back Propagation}
$$
\nabla_{x_{nc}}
= \frac{w_c}{s_c}
\left(
\nabla_{y_{nc}}-\frac{1}{N}\sum_{c^\prime} 
\nabla_{y_{nc^\prime}}
- \frac{1}{N s_c^2} (x_{nc}-m_c)
\sum_{n^\prime} (x_{n^\prime c}-m_{c})\nabla_{y_{n^\prime c}}
\right)
$$

The context is $(\Xv, \mv, \sv, \wv)$. We only quantize $\Xv$ here since all the other vectors are negligible in size.

\noindent\textbf{Unbiased Quantization}
We can keep two independently quantized copies of $x_{nc}$: $\hat x_{nc}$ and $\dot x_{nc}$, such that $\E{\hat x_{nc}}=x_{nc}$, $\E{\dot x_{nc}}=x_{nc}$. In this way,
\begin{align*}
\E{\hn_{x_{nc}}}
= \E{\frac{w_c}{s_c}
\left(
\hn_{y_{nc}}-\frac{1}{N}\sum_{c^\prime} 
\hn_{y_{nc^\prime}}
- \frac{1}{N s_c^2} (\hat x_{nc}-m_c)
\sum_{n^\prime} (\dot x_{n^\prime c}-m_{c})\hn_{y_{n^\prime c}}
\right)}
\end{align*}
Due to independence, 
\begin{align*}
\E{\hn_{x_{nc}}}
&= \frac{w_c}{s_c}
	\left(
	\E{\hn_{y_{nc}}}-\frac{1}{N}\sum_{c^\prime} 
	\E{\hn_{y_{nc^\prime}}}
	- \frac{1}{N s_c^2} \E{\hat x_{nc}-m_c}
	\sum_{n^\prime} \E{\dot x_{n^\prime c}-m_{c}}\E{\hn_{y_{n^\prime c}}}
	\right)\\
&= \frac{w_c}{s_c}
\left(
\nabla_{y_{nc}}-\frac{1}{N}\sum_{c^\prime} 
\nabla_{y_{nc^\prime}}
- \frac{1}{N s_c^2} (x_{nc}-m_c)
\sum_{n^\prime} (x_{n^\prime c}-m_{c})\nabla_{y_{n^\prime c}}
\right) = \nabla_{x_{nc}}.
\end{align*}
Therefore, the gradient of the input is unbiased.

\textbf{Gradient Variance}
\begin{align*}
\Var{\hn_{x_{nc}}}
&=\frac{w_c^2}{N^2 s_c^6}
\Var{
	(\hat x_{nc}-m_c)
	\sum_{n^\prime} (\dot x_{n^\prime c}-m_{c})\nabla_{y_{n^\prime c}}
} \\
&= \frac{w_c^2}{N^2 s_c^6}\left(\Var{A}\Var{B} + \E{A}^2 \Var{B} + \Var{A} \E{B}^2\right),
\end{align*}
utilizing $\Var{AB}=\Var{A}\Var{B}+\E{A}^2 \Var{B} + \Var{A} \E{B}^2$, if $A$ and $B$ are independent, 
where 
\begin{align*}
A &= \hat x_{nc}-m_c,~~~~
\E{A} = x_{nc} - m_c,~~~~
\Var{A} = \Var{\hat x_{nc}}\\
B &=\sum_{n^\prime} (\dot x_{n^\prime c}-m_{c})\nabla_{y_{n^\prime c}},~~~~
\E{B}=\sum_{n^\prime} ( x_{n^\prime c}-m_{c})\nabla_{y_{n^\prime c}},~~~~
\Var{B}=\sum_{n^\prime}\Var{\dot x_{n^\prime c}}\nabla_{y_{n^\prime c}}^2.
\end{align*}
Assume that $\Var{A}\ll \E{A}^2$ and $\Var{B}\ll \E{B}^2$, and utilize Eq.~(\ref{eqn:bn-forward}), we have
\begin{align*}
\Var{\hn_{X_{nc}}}
\approx \frac{w_c^2}{N^2 s_c^4}
\left[
y_{nc}^2 \sum_{n^\prime} \Var{\dot x_{n^\prime c}}\nabla_{y_{n^\prime c}}^2 +
\Var{\hat x_{nc}}\left(
\sum_{n^\prime} y_{n^\prime c}\nabla_{y_{n^\prime c}}
\right)^2
\right].
\end{align*}
Let $d_c=\sum_{n^\prime} y_{n^\prime c}\nabla_{y_{n^\prime c}}$, and plug $\Var{\hat x_{nc}}\approx \frac{R_n^2}{6B_n^2}$ in, we have 
\begin{align*}
\Var{\nabla_{X_{nc}}}
\approx \frac{w_c^2}{6N^2 s_c^4}
\left[
y_{nc}^2 \sum_{n^\prime} \frac{R_{n^\prime}^2}{B_{n^\prime}^2}\nabla_{y_{n^\prime c}}^2 +
\frac{R_{n}^2}{B_{n}^2}d_c^2
\right].
\end{align*}
Summing the terms up, we have
\begin{align*}
\Var{\nabla_{\Xv}}=\sum_{nc}\Var{\nabla_{x_{nc}}}=
\frac{1}{6N^2}\sum_{n^\prime} \frac{R_{n^\prime}^2}{B_{n^\prime}^2}\left(
\sum_c \frac{w_c^2}{s_c^4} \nabla_{y_{n^\prime c}}^2 \sum_n y_{nc}^2 
\right)+\sum_{n} \frac{R_n^2}{B_n^2}\sum_c \frac{w_c^2}{s_c^4}d_c^2.
\end{align*}
Finally, noticing that $\sum_n Y_{nc}^2 = Nw_c^2$, and rearrange the terms, we have
\begin{align*}
\Var{\nabla_{\Xv}}
=\frac{1}{6N^2}\sum_{n} \frac{R_{n}^2}{B_{n}^2}\left(
\sum_c \frac{w_c^2}{s_c^4}
\left(
N w_c^2 \nabla_{y_{nc}}^2 + d_c^2
\right)
\right).
\end{align*}
This can be computed by keeping track of $ \sum_c w_c^4/s_c^4\nabla_{y_{nc}}^2$ for each $n$ and $d_c^2$ for each $d$.
In practice, we may not be able to record the gradient for every sample. In this case, we approximate the gradient-related terms with a constant,
\begin{align*}
\Var{\nabla_{\Xv}}
\propto\sum_{n} \frac{R_{n}^2}{B_{n}^2}.
\end{align*}

\noindent\textbf{Biased Quantization}
As maintaining two quantized copies is too expensive, we only maintain one copy in practice. The gradient is still close to unbiased in this case. To see this,
\begin{align*}
\E{\hn_{x_{nc}}}
&= \frac{w_c}{s_c}
	\left(
	\hn_{y_{nc}}-\frac{1}{N}\sum_{c^\prime} 
	\hn_{y_{nc^\prime}}
	- \frac{1}{N s_c^2}\E{ (\hat x_{nc}-m_c)
	\sum_{n^\prime} (\hat x_{n^\prime c}-m_{c})\hn_{y_{n^\prime c}}}
	\right)\\
&= \frac{w_c}{s_c}
\left(
\hn_{y_{nc}}-\frac{1}{N}\sum_{c^\prime} 
\hn_{y_{nc^\prime}}
- \frac{1}{N s_c^2}\left( (x_{nc}-m_c)
	\sum_{n^\prime} (x_{n^\prime c}-m_{c})\hn_{y_{n^\prime c}}
	\right)+ \Var{\hat x_{nc}}\hn_{y_{nc}}\right)
\end{align*}
As $N$ is huge, the additional term $\Var{\hat x_{nc}}\hn_{y_{nc}}$ is negligible comparing to other terms. 

\subsection{Activation Layers}
Activation layers take the form 
\begin{align*}
y_i = f(x_i),~~~~\nabla_{x_i} = f'(x_i) \nabla_{y_i},
\end{align*}
where $f(\cdot)$ is an activation function. We can simply store a quantized version of $\{f'(x_i)\}$ as the context for unbiased gradient.

ReLU layers are particularly simple, which have $f'(x_i) = \Ib(x_i>0)$. Therefore, ReLU layers only take a single bit per dimension to store, without any approximation. 

\subsection{Pooling Layers}
Pooling layers are computed without any approximation. We don't need to quantize their context because their context only takes little memory.
We discuss how many bits are required for their context below.

\noindent\textbf{Average pooling layers}
An average pooling layer can be seen as a special case of convolution layer with a constant kernel.
Because the kernel is a constant, we do not need to compute the gradient of kernel.
To compute the gradient of input, according to Eq. \ref{eqn:conv-gradient}, we only need the gradient of output. Therefore, we can compute average pooling pooling layers exactly without saving anything in the context.

\noindent\textbf{Max Pooling}
Following the notation in Eq. \ref{eqn:conv-definition}. A max pooling layer takes the form

\begin{align*}
y_{nij} = \max_{\Delta_i}{x_{n,si+d\Delta_i,j}}
\end{align*}

The gradient of input is
\begin{align*}
\nabla_{x_{nij}} = \sum_{\Delta_i}{ \nabla_{y_{n,(i-d\Delta_i)/s,j}} \Ib( \argmax_k{x_{n,i-d\Delta_i + dk,j}} = \Delta_i) }
\end{align*}

For each output location $y_{nij}$, we need to store an integer value $k_{nij} = \argmax_{\Delta_i}{x_{n,si+d\Delta_i,j}}$. Note that $0 \le k_{nij} < K$, where K is the kernel size of pooling. To store $k_{nij}$, we need $\ceil{log(K)}$ bits per output location (pixel).
In common neural networks, the kernel size of a max pooling layer is less than $8 \times 8=256$. We use 8 bits per output location in our implementation.

\section{Experimental Setup and Additional Experiments}\label{sec:more-experiments}
\subsection{Experimental Setup}\label{sec:exp-setup}
We use standard open-source model architecture and training recipes for all the tasks.

\noindent\textbf{Quantization Strategy}
We use the ResNet50-v1.5 repo\footnote{\url{https://github.com/NVIDIA/DeepLearningExamples/tree/master/PyTorch/Classification/ConvNets/resnet50v1.5}} for ImageNet experiments. The batch size is $32\times 8=256$, and the initial learning rate is $0.256$. We train 90 epochs with 4 warmup epochs. The repo further has cosine learning rate schedule and label smoothing. 

\noindent\textbf{Computational Overhead} We use ResNet-50, ResNet-152, WideResNet-101 and DenseNet-201 from the \textbf{torchvision} package.
We convert them to use ActNN's layers by our model convertor.
We run five training batches on ImageNet and report the median of the training throughput (images per second).

\noindent\textbf{Semantic Segmentation and Object Detection}
We use the open-source frameworks MMSegmentation~\cite{mmseg2020} and MMDetection~\cite{mmdetection} for these tasks, and follow the original training recipes. For semantic segmentation, the crop size is $512\times1024$. The methods are FCN~\cite{shelhamer2017fully} and FPN~\cite{kirillov2019panoptic}.
Backbones are chosen from HRNetV2W48 (HRNet)~\cite{wang2020deep}, ResNet-50 dilation 8 (Dilation8), and original ResNet-50 (FPN). For object detection, the input size is 800 pixels for the short edge. The model is RetinaNet~\cite{lin2017focal} with FPN as the backbone.

\subsection{Variance Profile}\label{sec:var-profile}
We visualize all the terms in Thm.~\ref{thm:grad-var} in Fig.~\ref{fig:var-profile} for a ResNet-50 trained on ImageNet at the 50-th epoch. The quantization strategy is 2-bit per-group quantization (\method L2 in Tab.~\ref{tab:opt_level}).
 In the figure, each row is a stochasticity and each column is a parameter. For example, the entry at the $m$-th row and the $l$-th column is the term $\E{
	\Varcond{ \GT{l\sim m}{
			\hn_{\Hl{m}}, \hat\Cv^{(m)}
	}}{\hn_{\Hl{m}}}
}$, i.e., the impact of the $m$-th layer's quantized activation to the gradient variance of the $l$-th layer's parameter. 
The last row is the minibatch sampling variance $\Var{ \nabla_{\Thetal{l}}}$. From the figure we can observe that
\begin{enumerate}
	\item Minibatch sampling variance is much higher than the quantization variance. Therefore, it is possible to train with compressed activations, without impacting the final accuracy;
	\item The quantization variance for each layer is dominantly impacted by compressing the activation at the same layer. Therefore, our strategy in Sec.~\ref{sec:fine-grined}, which omits all the distant terms, approximates the exact variance well.
\end{enumerate}

\begin{figure*}[p]
\centering
\includegraphics[height=20cm]{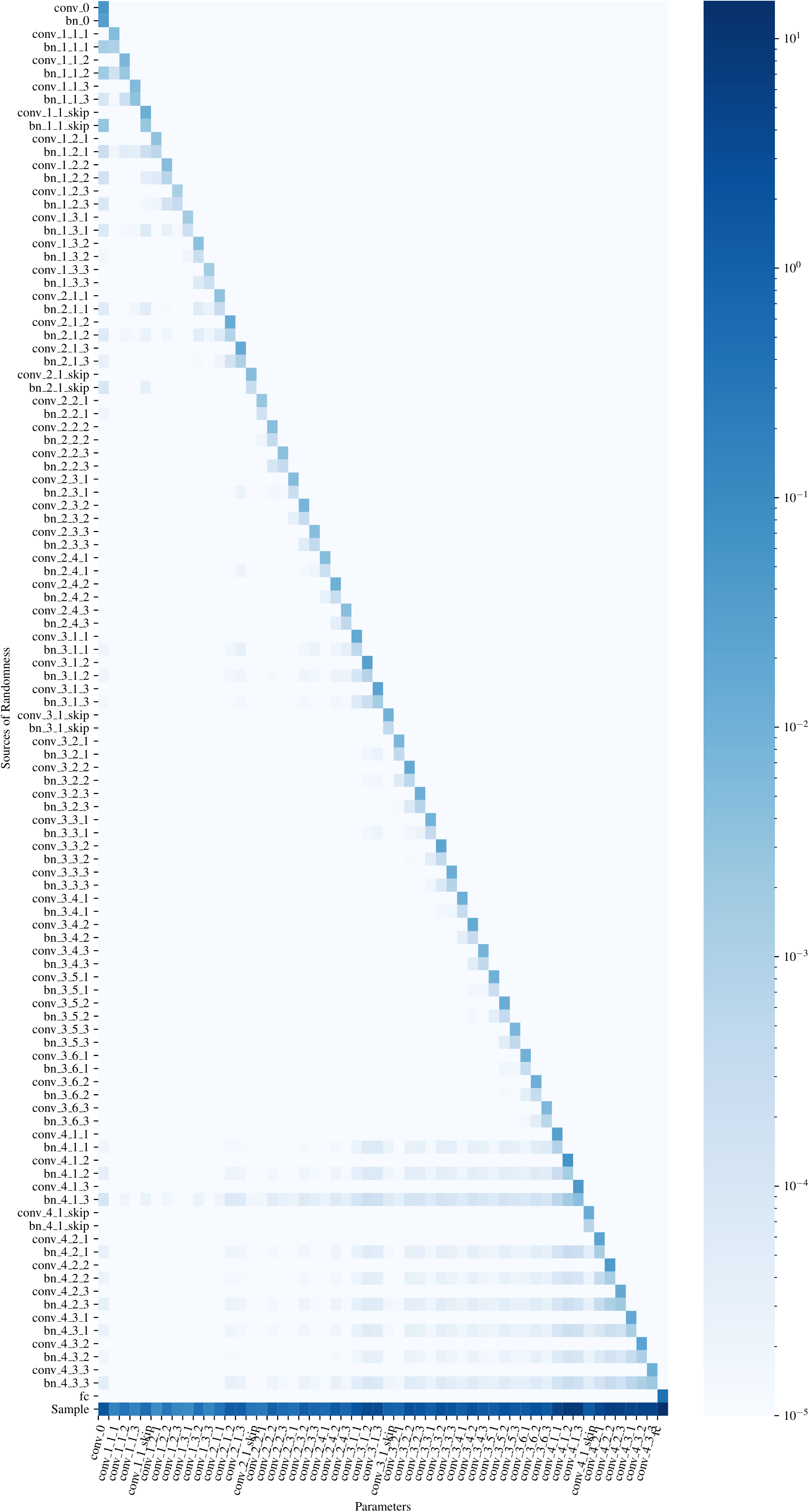}\label{fig:var-profile}
\caption{A decomposition of variance according to Thm.~\ref{thm:grad-var}. Each row is a source of randomness, either quantization or minibatch sampling (last row). Each column is a parameter gradient, which we would like measure variance of. Each entry is the impact of one source of randomness to one layer's parameter gradient.}\label{fig:variance-profile}
\end{figure*}

\subsection{CIFAR-10 Results}\label{sec:cifar10}
In Fig.~\ref{fig:more-gradvar}, we additional present results on CIFAR-10. The conclusion remains the same with the CIFAR-100 and ImageNet experiments in the main text. BLPA converges only at 4 bits, while \method converges at 2 bits.

\begin{figure*}[t]
	\centering
	\begingroup
	\setlength{\tabcolsep}{0pt} 
	\scriptsize
	\begin{tabular}{m{5cm}m{5cm}m{5cm}m{2.4cm}}
		\includegraphics[width=.95\linewidth]{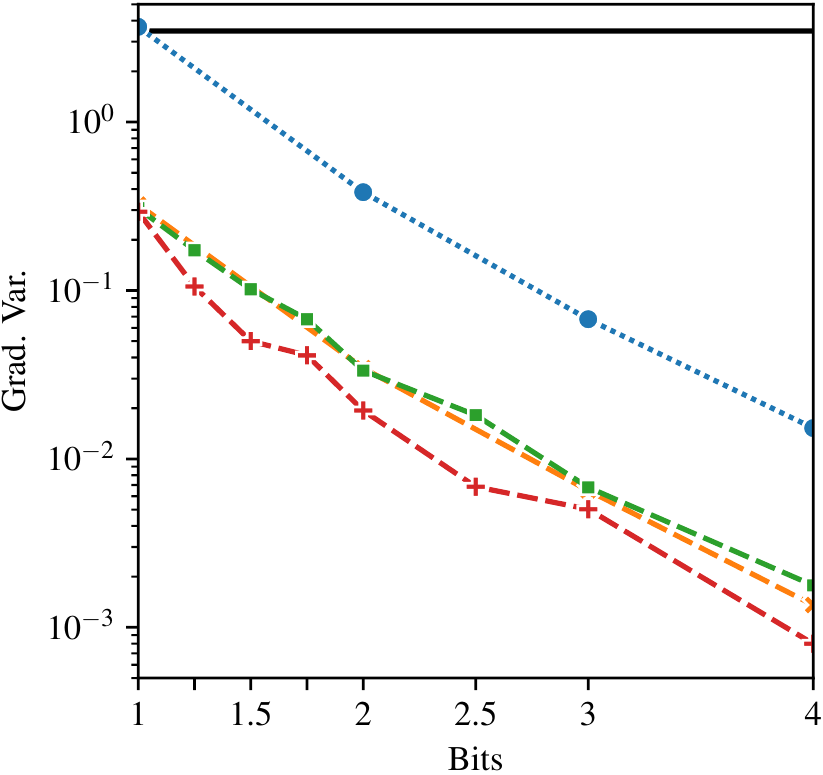}&
		\includegraphics[width=.95\linewidth]{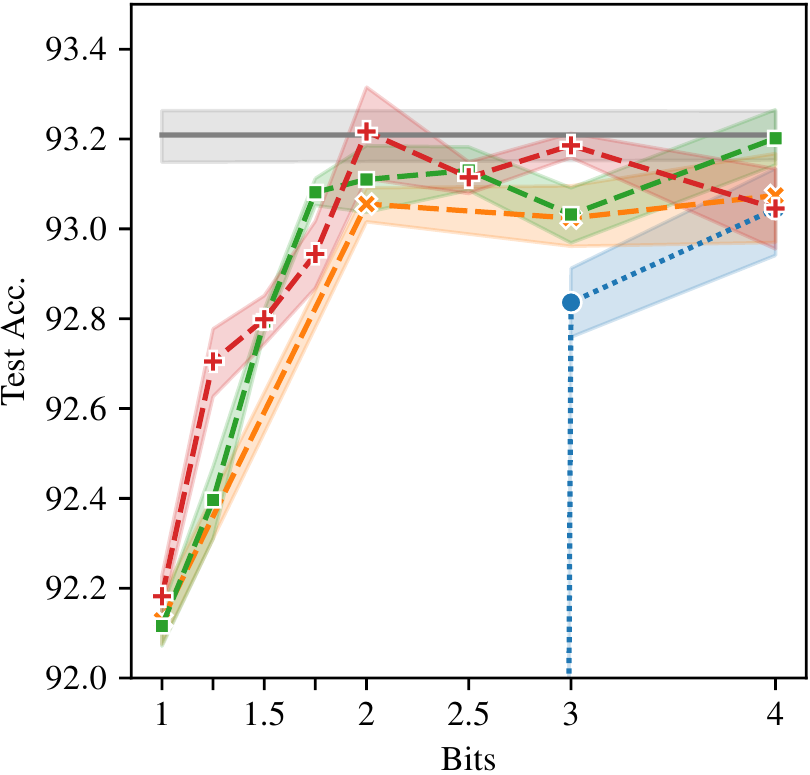}&
		\includegraphics[width=.95\linewidth]{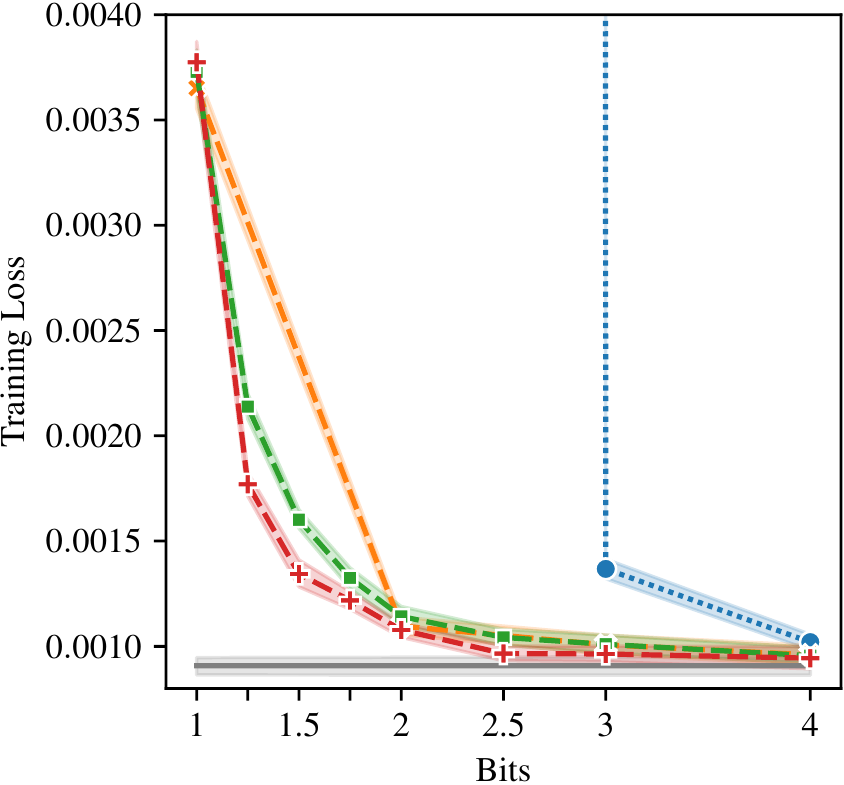}&
		\includegraphics[width=\linewidth]{figures/legend.pdf}\\
		(a) Gradient variance on CIFAR-10 & (b) Testing accuracy on CIFAR-10 & (c) Testing loss on CIFAR-10 & 
	\end{tabular}
	\endgroup
	\caption{Ablation study on the quantization strategy on CIFAR-10. BLPA diverges with 1 and 2 bits. The gradient variance is calculated at the 10th epoch.  }
	\label{fig:more-gradvar}
\end{figure*}

\end{document}